\newtheoremstyle{break}
  {\topsep}{\topsep}%
  {\itshape}{}%
  {\bfseries}{}%
  {\newline}{}%
\theoremstyle{break}
\newtheorem{definition}{Definition}
\newtheorem{theorem}{Theorem}
\newtheorem*{theorem*}{Theorem}
\DeclarePairedDelimiter{\ceil}{\lceil}{\rceil}
\newtheorem{corollary}{Corollary}[theorem]
\newtheorem{lemma}{Lemma}
\icmltitlerunning{Approximating Optimal Discounted TSP Using Local Policies}
\begin{document}

\twocolumn[
\icmltitle{Hierarchical Reinforcement Learning:\\
Approximating Optimal Discounted TSP Using Local Policies}

\begin{icmlauthorlist}
\icmlauthor{Tom Zahavy}{goo,tech}
\icmlauthor{Avinatan Hasidim}{goo,bi}
\icmlauthor{Haim Kaplan}{goo,tau}
\icmlauthor{Yishay Mansour}{goo,tau}
\end{icmlauthorlist}

\icmlaffiliation{goo}{Google}
\icmlaffiliation{bi}{Bar Ilan Univeristy}
\icmlaffiliation{tau}{Tel Aviv University}
\icmlaffiliation{tech}{The Technion}

\icmlcorrespondingauthor{Tom Zahavy}{tomzahavy@gmail.com}

\icmlkeywords{Reinforcement Learning}

\vskip 0.3in
]



\printAffiliationsAndNotice{}  

\begin{abstract}
In this work, we provide theoretical guarantees for reward decomposition in deterministic MDPs. Reward decomposition is a special case of Hierarchical Reinforcement Learning, that allows one to learn many policies in parallel and  combine them into a composite solution. Our approach builds on mapping this problem into a Reward Discounted Traveling Salesman Problem, and then deriving approximate solutions for it. In particular, we focus on approximate solutions that are local, i.e., solutions that only observe information about the current state. Local policies are easy to implement and do not require substantial computational resources as they do not perform planning. While local deterministic policies, like Nearest Neighbor, are being used in practice for  hierarchical reinforcement learning, we propose three stochastic policies that guarantee better performance than any deterministic policy. 
\end{abstract}

\section{Introduction}
One of the unique characteristics of human problem solving is the ability to represent the world on different granularities. When we plan a trip, we first choose the destinations we want to visit and only then decide what to do at each destination. Hierarchical reasoning enables us to map the complexities of the world around us into simple plans that are computationally tractable to reason. Nevertheless, the most successful Reinforcement Learning (RL) algorithms are still performing planning with only one abstraction level. 

RL provides a general framework for optimizing decisions in dynamic environments. However, scaling it to real-world problems suffers from the curses of dimensionality; that is, coping with exponentially large state spaces, action spaces, and long horizons. One approach deals with large state spaces by introducing a function approximation to the value function or policy, making it possible to generalize across different states. Two famous examples are TD-Gammon \citep{tesauro1995temporal} and the Deep Q Network (DQN) \citep{mnih2015human}, both introduced a Deep Neural Network (DNN) to approximate the value function leading to a high performance in solving Backgammon and video games. A different approach deals with long horizons by using a policy network to search among game outcomes efficiently \citep{silver2016mastering}, leading to a super-human performance in playing Go, Chess, and Poker \citep{silver2016mastering,silver2017mastering,moravvcik2017deepstack}. However, utilizing this approach when it is not possible to simulate the environment by doing model-based RL is still an open problem \cite{oh2015action}.

A long-standing approach for dealing with long horizons is to introduce hierarchy into the problem (see \citet{barto2003recent} for a survey). We will focus on the options framework \citep{sutton1999between}, a two-level hierarchy formulation where options (local policies that map states to actions) are learned to achieve subgoals, while the policy over options selects among options to accomplish the final goal of a task. Recently, it was demonstrated that learning a selection rule among pre-defined options using a DNN delivers promising results in challenging environments like Minecraft and Atari \citep{tessler2017deep,kulkarni2016hierarchical,oh2017zero}; other studies have shown that it is possible to learn options jointly with a policy-over-options end-to-end \citep{vezhnevets2017feudal,bacon2017option}.

In this work, we focus on a specific type of hierarchy - reward function decomposition - that dates back to the works of \citep{humphrys1996action,karlsson1997learning} and has been studied among different research groups recently \citep{van2017hybrid}. In this formulation, each option $i$ learns to maximize a local reward function $R_i$, while the final goal is to maximize the sum of rewards $R_M=\sum R_i$. Each option is trained separately and provides a value function and an option $o _i $ for the policy over options, which then uses the local values to select among options. That way, each option is responsible for solving a simple task, and the options are learned in parallel across different machines. While the higher level policy can be trained using SMDP algorithms \cite{sutton1999between}, different research groups suggested using pre-defined rules to select among options. For example, choosing the option with maximal value function \cite{humphrys1996action,barreto2016successor}, or choosing the action that maximizes the sum of option value functions \cite{karlsson1997learning}. By using pre-defined rules, we can derive policies for MDP $M$ by learning options (and without learning in MDP $M$), such that learning is fully decentralized. Although in many cases one can reconstruct from the options the original MDP, doing it would defeat the entire purpose of using options. In this work we concentrate on local rules that select among available options.

Even more specifically, we consider a set of $n$ MDPs $\{M_i\}_{i=1}^n = \{S,A,P,\gamma,R_i\}_{i=1}^n$ with deterministic dynamics that share all components but the reward. Given a set of options, one per reward,  with an optimal policy for collecting the reward, we are interested in deriving an optimal policy for collecting all the rewards, i.e., solving MDP $M = \{S,A,P,R_M = \sum_{i=1}^k R_i \}$. In this setting, an optimal policy for $M$ can be derived by solving the SMDP $M_s = \{S,O,P,R_M = \sum_{i=1}^k R_i \}$, whose actions are the optimal policies for collecting single rewards. 

Specifically, we focus on collectible rewards, a special type of reward that is very common in 2D and 3D navigation domains like Minecraft \citep{tessler2017deep,oh2017zero}, DeepMindLab \citep{teh2017distral,beattie2016deepmind} and VizDoom \citep{kempka2016vizdoom}. The challenge when dealing with collectible rewards is that the state space changes each time we collect a reward (One can think of the subset of available rewards is part of the state). Since all the combinations of remaining items have to be considered, the state space grows exponentially with the number of rewards.

Here, we show that solving an SMDP under these considerations is equivalent to solving a Reward Discounted Traveling Salesman Problem (RD-TSP) \citep{blum2007approximation}. Similar to the classical Traveling Salesman Problem (TSP), computing an optimal solution to  RD-TSP is NP-hard, and furthermore, it is NP-hard to get an approximate solution of value at least 99.5\% the optimal discounted return \citep{blum2007approximation} in polynomial time.\footnote{That is, any algorithm for approximating the optimal return for RD-TSP to within a factor larger than $0.995$ must have a worst-case running time that grows faster than any polynomial (assuming the widely believed conjecture that P $\ne$ NP).} 

A brute force approach for solving the RD-TSP requires evaluating all the $n!$ possible tours connecting the $n$ rewards.
We can also adapt the Bellman–Held–Karp dynamic programming algorithm for TSP \citep{bellman1962dynamic,held1962dynamic} 
to solve RD-TSP (see Algorithm \ref{alg:held-karp} in the appendix). This scheme is 
identical to tabular Q-learning on SMDP $M_s$, and still requires exponential time.\footnote{The Hardness results for RD-TSP do not rule out efficient solutions for special MDPs. For example, we provide, in the appendix,
exact polynomial-time solutions for the case in which  the MDP is a line and when it is a star.}

\citet{blum2007approximation} proposed a polynomial time planning algorithm for RD-TSP that computes a policy
which collects at least $0.15$ fraction of the optimal discounted return, which was later improved to $0.19$
 \citep{farbstein2016discounted}. 
 These planning algorithms need to know the entire SMDP in order to compute their approximately optimal policies.
 
In contrast, in this work, we focus on deriving and analyzing  policies that use only \textbf{local} information\footnote{Observe only the value  of each option from the current state.} to make decisions; such \textbf{local policies} are simpler to implement, more efficient, and do not need to learn in $M$ nor $M_s$. 
The reinforcement learning community is already using simple local approximation algorithms for RD-TSP.
We hope that our research will provide important theoretical support for comparing local heuristics, and in addition introduce new reasonable local heuristics. 
Specifically,
we prove worst-case guarantees on the reward collected by these algorithms relative to the reward of the optimal RD-TSP tour. We also prove bounds on the maximum relative reward that such local algorithms can collect.
In our experiments, we compare the performance of these local algorithms.
In particular, our main contributions are as follows.

{\bf Our results:} We establish impossibility results for local policies, showing that no deterministic local policy can guarantee a reward larger  than $ 24 \text{OPT} / n $ for every  MDP, and no stochastic policy can guarantee a reward larger than $8 \text{OPT} / \sqrt{n}$  for every  MDP. These impossibility results imply that the Nearest Neighbor (NN) algorithm that iteratively collects the closest reward (and thereby a total of at least $OPT/n$ reward) is optimal up to constant factor amongst all deterministic local policies.

On the positive side, we propose three simple stochastic policies that outperform NN. The best of them combines NN with a Random Depth First Search (RDFS) and guarantees performance of at least $\Omega \left( \text{OPT} / \sqrt{n} \right)$ when OPT achieves $\Omega \left(n \right)$,  and at least $\Omega (\text{OPT} / n^{\frac{2}{3}} )$ in the general case.
Combining NN with jumping to a random reward and sorting the rewards by their distance from it, has a slightly worse guarantee.
A simple modification of the NN to first jump to a random reward and continues NN from there, already improves the guarantee to $O(OPT \log(n) /n)$.

\section{Problem formulation}
\label{sec:problem}
We now define our problem explicitly, starting from a general transfer framework in Definition \ref{def:1}, and 
then the more specific transfer learning setting of
of \emph{collectible reward decomposition} in Definition \ref{def:2}. 
\begin{definition}[General Transfer Framework] 
\label{def:1}
Given a set of MDPs $\{M_i\}_{i=1}^n =\{S,A,P,\gamma,R_i\}_{i=1}^n$ and an MDP $M=\{S,A,P,\gamma,R_M = f \left( R_1,...,R_n \right) \}$ that differ only by their reward signal, derive an optimal policy for $M$ given the optimal policies for $\{M_i\}_{i=1}^n$.
\end{definition}
Definition \ref{def:1} describes a general transfer learning problem in RL. Similar to \cite{barreto2016successor}, our transfer framework assumes a set of MDPs sharing all but the reward signal. We are interested in transfer learning, i.e., using quantities that were learned from the MDPs $\{M_i\}_{i=1}^n$ on the MDP $M$. More specifically for model-free RL, given a set of optimal options and their value functions $\{o^* _i,V^*_i\}_{i=1}^n$, we are interested in zero-shot transfer to MDP $M$, i.e., deriving policies for solving $M$ without learning in $M$. 
\begin{definition}[Collectible Reward Decomposition] 
\label{def:2}
1. \textbf{Reward Decomposition:} the reward in $M$ represents the sum of the local rewards: $R_M = \sum _{i=1} ^ n R_i$.\\
2. \textbf{Collectible Rewards:} each reward signal $\{R_i\}_{i=1}^n$ represents a collectible prize, i.e., $R_i(s,a) = 1$ iff $s = s_i, \enspace a=a_i$ for some particular state $s_i$ and action $a_i$  and $R_i(s,a) = 0$ otherwise. In addition, each reward can only be collected once.\\
3. \textbf{Deterministic Dynamics:} $P$ is a deterministic transition matrix, i.e., for each action $a$ each row of $P^a$ has exactly one value that equals $1$, and all its other values equal zero.
\end{definition}
Property 1 in Definition \ref{def:2} requires $R_M$ to be a decomposition of the previous rewards, and Property 2 requires each local reward to be a collectible prize. While limiting the generality, models that satisfy these properties has been investigated in theory and simulation \citep{oh2017zero,barreto2016successor,tessler2017deep,higgins2017darla,van2017hybrid,humphrys1996action}.
Now, given that the value functions of the local policies are optimal, the shortest path from a reward $i$ to a reward $j$ is given by following option $o_j$ from state $i$. In addition, the length of the shortest path from $i$ to $j$, denoted by $d_{i,j}$, is given by the value function, since $V_j(i) = \gamma ^ {d_{i,j}}$. Notice that in any state, an optimal policy on $M$ will always follow the shortest path to one of the rewards. To see this, assume there exists a policy $\mu$ that is not following the shortest path from some state $k$ to the next reward-state $k'$. Then, we can improve $\mu$ by taking the shortest path from $k$ to $k'$, contradicting the optimality of $\mu$. The last observation implies that an optimal policy on $M$ is a composition of the local options $\{o _i \}_{i=1}^n$.

Property 3 in Definition \ref{def:2} requires deterministic dynamics. This property is perhaps the most limiting of the three, but again, it appears in numerous domains including many maze navigation problems, the Arcade Learning Environment \citep{bellemare2013arcade}, and games like Chess and Go. Given that $P$ is a deterministic transition matrix, an optimal policy on $M$ will make decisions only at states which contain rewards. In other words, once the policy arrived at a reward state $i$ and decided to go to a reward-state $j$ it will follow the optimal policies $\pi _j$ until it reaches $j$.\footnote{Note that this is not true if $P$ is stochastic. To see this, recall that stochastic shortest path is only shortest in expectation. Thus, stochasticity may lead to states that require changing the decision.} 

For \emph{collectible reward decomposition} (Definition \ref{def:2}), an optimal policy for $M$ can be derived on an SMDP \citep{sutton1999between} denoted by $M_s$. The state space of $M_s$ contains only the initial state $s_0$ and the reward states $\{s_i \}_{i=1}^n$. The action space is replaced by the set of options $\{o_i \}_{i=1}^n$, where $o_i$ corresponds to following the optimal policy in $M _i$ until reaching state $s_i$. In addition, in this action space, the transition matrix $P$ is deterministic since $\forall s,a_i, \; \exists s'$ such that $P^{a_i}_{s,s'}=1$, and otherwise $P^{a_i}_{s,\cdot}=0$. Finally, the reward signal and the discount factor remain the same. 

In general, optimal policies for SMDPs are guaranteed to be optimal in the original MDP only if the SMDP includes both the options and the regular (primitive) actions \citep{sutton1999between}. In a related study, \citet{mann2015approximate} analyzed landmark options, a specific type of options that plan to reach a state on a deterministic MDP. While landmark options are not related to reward decomposition or collectible rewards, they also represent policies that plan to reach a specific state in an MDP. Given a set of landmark options, \citeauthor{mann2015approximate} analyzed the errors from searching the optimal solution on the SMDP (planning with landmark options) instead of searching it in the original MDP (planning with primitive actions). Under Definition \ref{def:2}, and given that all of our policies and value functions are optimal, all these errors equal zero. Thus, the solution to the SMDP is guaranteed to be optimal to the MDP. In addition, the analysis of \citet{mann2015approximate} provides bounds for dealing with sub-optimal options and nondeterministic dynamics that may help to extend our analysis of these cases in future work. 


Finally, an optimal policy on $M_s$ can be derived by solving an RD-TSP (Definition \ref{def:3}). To see this, look at a graph that includes only the initial state and the reward states. Define the length $d_{i,j}$ of an edge $e_{i,j}$ in the graph to be $V_j(i)$, i.e., the value of following option $j$ from state $i$. A path in the graph is defined by a set of indices $\{i_t \}_{t=1}^n$and the length of the path is given by $\sum _{t=0}^{n-1} d_{i_t,i_{t+1}}$. 

\begin{definition}[RD-TSP] 
\label{def:3}
Given an undirected graph with $n$ nodes and edges of length $e_{i,j}$, find a path in the graph that maximizes the discounted cumulative return:
$$
\{i_t^* \}_{t=1}^n  = \underset{\{i_t \}_{t=1}^n \in \text{perm} \{ 1, .., n\} }{\text{arg max}}\sum_{j=0}^{n-1} \gamma ^ {\sum _{t=0}^{j} d_{i_t,i_{t+1}}}
$$
\end{definition}

To summarize, our modeling approach allows us to deal with the curse of dimensionality in three different ways. First, each option can be learned with function approximation techniques, e.g., \cite{tessler2017deep,bacon2017option} to deal with raw, high dimensional inputs like vision and text. Second, formulating the problem as an SMDP reduces the state space to include only the reward states and effectively reduces the planning horizon \citep{sutton1999between,mann2015approximate}. And third, under the RD-TSP formulation, we derive approximate solutions for dealing with the exponentially large state spaces that emerge from modeling one-time events like collectible rewards.

\section{Local heuristics}
\label{sec:local}

%
We start by defining a local policy.
All the policies which we analyze are local policies.
A local policy is  a mapping who inputs are:
 (1) the current state $x$, 
 (2) the history $h$ containing the previous steps taken by the policy; in particular $h$ encodes the collectible states that we had already visited, and 
 (3) the discounted return for each reward from the current state, i.e., $\{V_i (x)\}_{i=1}^n$,
 and whose output is a distribution over the options. 
 Notice that
  a local policy  does not have full information on the graph (but only on local distances). Formally,

\begin{definition}[Local policy]
A local policy $\pi_{local}$ is a mapping:
$$\pi_{local} (x,h,\{V_i (x)\}_{i=1}^n) \rightarrow \Delta (\{o_i\}_{i=1}^n), $$
where $\Delta(X)$ is the set of distributions over a finite set $X$.
\end{definition}
Due to space considerations, most of the proofs are found in the supplementary material.
\subsection{NN performance}
We start with an analysis of one of the most natural heuristics for the TSP, the famous NN algorithm. In the context of our problem, NN is the policy selecting the option with the highest estimated value, exactly like GPI \citep{barreto2016successor}. 
We shall abuse the notation slightly and use the same name (e.g., NN) for the algorithm itself and its value; no confusion will arise.
For TSP (without discount) in general graphs, 
we know that \citep{rosenkrantz1977analysis}:
$$
\frac{1}{3} \log_2(n+1) +\frac{4}{9} \le \frac{\text{NN}}{\text{OPT}} \le \frac{1}{2} \ceil{\log_2(n)}+\frac{1}{2}
$$

However, for  RD-TSP the NN algorithm only guarantees a value of $\frac{OPT}{n}$, as Theorem \ref{theo:1} states. In the next subsection, we prove a lower bound for deterministic local policies (such as NN) of $O \left( \frac{OPT}{n}\right)$, that implies that NN is optimal for deterministic policies. The observation that NN is at least $OPT / n$ of OPT motivated us to use NN as a component in our stochastic algorithms. 

\begin{theorem}[NN Performance]
\label{theo:1}
For any discounted reward traveling salesman graph with n nodes, and any discount factor $\gamma$: {\Large $\enspace \enspace  \frac{\text{NN}}{\text{OPT}} \ge  \frac{1}{n} . $}
\end{theorem}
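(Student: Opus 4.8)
The plan is to compare both NN and OPT against a single common quantity: the discount incurred in reaching the reward nearest the start. Let $d_{\min} = \min_{i} d_{s_0,i}$ denote the length of the shortest edge out of the start node $s_0$, and observe that since $\gamma \in (0,1)$ the value $\gamma^{d_{\min}}$ is the largest discounted return that any single reward can contribute to any tour, because the discount $\gamma^{d}$ is non-increasing in the distance $d$ and no reward can be reached in fewer than $d_{\min}$ steps from $s_0$.

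First I would lower bound NN. By definition, NN's first move collects the reward closest to $s_0$, so its first collected reward contributes exactly $\gamma^{d_{\min}}$ to its discounted return. Since every subsequent term in the discounted sum is non-negative, this immediately gives $\text{NN} \ge \gamma^{d_{\min}}$.

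Next I would upper bound OPT. Write OPT's tour as $s_0 = i_0 \to i_1 \to \cdots \to i_n$, so that its discounted return is $\sum_{j=0}^{n-1} \gamma^{D_j}$ with $D_j = \sum_{t=0}^{j} d_{i_t,i_{t+1}}$. Because all edge lengths are non-negative, $D_j \ge d_{i_0,i_1} \ge d_{\min}$ for every $j \ge 0$, and hence each term $\gamma^{D_j}$ is at most $\gamma^{d_{\min}}$. Summing the $n$ terms yields $\text{OPT} \le n\,\gamma^{d_{\min}}$. Combining the two bounds gives $\text{NN}/\text{OPT} \ge \gamma^{d_{\min}}/(n\,\gamma^{d_{\min}}) = 1/n$, as claimed.

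The argument is short, and the only delicate point is the monotonicity step: verifying that in any tour every reward is discounted by at least $\gamma^{d_{\min}}$. This rests on two facts that hold throughout the setup, namely that the $d_{i,j}$ are non-negative shortest-path lengths (so cumulative distances only grow) and that $\gamma \le 1$ makes $\gamma^{d}$ non-increasing in $d$. The boundary case $\gamma = 1$ is handled trivially, since then every term equals $1$ and the ratio is exactly $1 \ge 1/n$.
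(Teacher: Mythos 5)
Your proposal is correct and is essentially the paper's own argument: both lower bound NN by $\gamma^{d_{\min}}$ via its first (nearest) pick, and upper bound OPT by $n\,\gamma^{d_{\min}}$ using the fact that OPT's first edge is at least $d_{\min}$ and cumulative discounts only shrink thereafter. The only cosmetic difference is that the paper factors $\gamma^{d_{0,i^*}}$ out of OPT's sum while you bound each of the $n$ terms individually, which yields the same inequality.
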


The basic idea of the proof (in the supplementary material) is that the NN has the first reward larger than any reward that OPT collects.\\
Next, we propose a simple, easy to implement, stochastic adjustment to the vanilla NN algorithm with a better upper bound which we call R-NN (for Random-NN). The algorithm starts by collecting one of the rewards, say $s_1$, at random, and continues by executing NN (Algorithm \ref{alg:nn-random}). 

\begin{algorithm}[h]
\caption{R-NN: NN with a First Random Pick}
\label{alg:nn-random}
\begin{algorithmic}
\STATE {\bfseries Input:} Graph G, with n nodes, and $s_0$ the first node
\STATE Flip a coin 
\IF[Perform NN]{outcome = heads}
    \STATE Visit a node at random, denote it by $s_1$   
\ENDIF
\STATE Follow by executing NN 
\end{algorithmic}
\end{algorithm}

The following theorem shows that our stochastic modification to NN improves its guarantees by a factor of $\log (n).$ 

\begin{theorem}[R-NN Performance]
\label{theo:nn-rand}
For a discounted reward traveling salesman graph with n nodes: {\Large $ \enspace \enspace
\frac{\text{R-NN}}{\text{OPT}} \ge \Omega \left( \frac{\log (n)}{n} \right).$}

\end{theorem}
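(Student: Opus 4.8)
The plan is to exploit the coin flip to reduce the statement to analyzing plain NN started from a \emph{uniformly random} reward, and then to argue that a random start, on average, escapes the adversarial positioning of $s_0$ that traps vanilla NN, gaining the factor of $\log n$ through a harmonic summation. First I would condition on the coin. Writing OPT's visiting order as $o_1,\dots,o_n$ with discounted values $q_k=\gamma^{D_k}$ (so $q_1\ge q_2\ge\cdots\ge q_n$ and $\text{OPT}=\sum_k q_k$), the jump branch starts NN from a uniformly random $o_j$. Letting $R_j$ denote the reward of jumping to $o_j$ and then running NN, and keeping the (nonnegative) plain-NN branch only for bookkeeping, this gives
\[
\mathbb{E}[\text{R-NN}] \;\ge\; \tfrac12\,\text{NN} + \tfrac{1}{2n}\sum_{j=1}^{n} R_j \;\ge\; \tfrac{1}{2n}\sum_{j=1}^{n} R_j ,
\]
so the theorem reduces to establishing $\sum_{j=1}^n R_j = \Omega(\log n)\,\text{OPT}$.

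Next I would record the two elementary facts that drive the estimate. The jump value satisfies $g_j:=\gamma^{d(s_0,o_j)}\ge \gamma^{D_j}=q_j$ by the triangle inequality (OPT's path from $s_0$ to $o_j$ is no shorter than the direct edge), so merely collecting the jumped-to reward already yields $R_j\ge g_j\ge q_j$ and hence $\tfrac{1}{2n}\sum_j R_j\ge \text{OPT}/(2n)$ --- this recovers Theorem~\ref{theo:1} but \emph{not} the $\log n$ gain. The extra factor must therefore come from the \emph{continuation} of NN out of high-value (``central'') starts: when $j$ is small, $o_j$ is central, and by the NN greedy rule together with the triangle inequality many of OPT's rewards remain within small value-distance of it, so NN collects well beyond $o_j$ itself.

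The core step I would carry out is a charging argument quantifying this. Using that from any current node NN always takes the highest-value reachable reward (so the collected relative values are monotone decreasing), I would show that each OPT reward $q_k$ is ``recovered'' with value $\Omega(q_k)$ by NN when launched from a number of starts growing with its rank $k$, yielding a bound of the form $\sum_j R_j \ge c\sum_k q_k\cdot(\text{count}_k)$ in which the aggregated counts telescope to a harmonic sum $\sum 1/(\text{index}) = \Theta(\log n)$; this harmonic series is exactly where the $\log n$ is born. The jump-cost bookkeeping throughout is handled by the inequality $g_j\ge q_j$, so that every recovered relative value translates to an absolute value at least as large as the corresponding $q_k$.

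The hard part will be controlling NN's myopia. Because greedy is \emph{not} optimal (indeed RD-TSP is NP-hard and NN is only an approximation), one cannot simply claim that NN from $o_j$ reproduces OPT's suffix $o_j\to o_{j+1}\to\cdots$: after its first greedy move NN may commit to a different branch and get stuck, so a naive suffix-recovery bound is false and would in fact overshoot the target. The technical heart is thus to prove that the adversary cannot make all $n$ random starts bad simultaneously --- that a random, typically central, start provably avoids the single trap that defeats $s_0$ --- and to make the per-reward charging robust to this greedy divergence. I expect this greedy-versus-optimal coupling, rather than the coin-flip decomposition or the harmonic summation, to be the main obstacle.
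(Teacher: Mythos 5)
Your coin-flip decomposition and the observation $g_j \ge q_j$ are sound, but the reduction you build on them aims at a false target. By discarding the plain-NN branch you commit to proving $\sum_{j=1}^n R_j = \Omega(\log n)\,\mathrm{OPT}$, and this fails: place a single reward at distance $d_0$ from $s_0$ and the remaining $n-1$ rewards at a distance $D$ so large that $\gamma^{D}$ is negligible. Then $\mathrm{OPT}\approx\gamma^{d_0}$, $R_1\approx\mathrm{OPT}$, and $R_j\approx 0$ for every far start (the jump discount alone kills the value), so $\sum_j R_j\approx\mathrm{OPT}=o(\log n)\,\mathrm{OPT}$. In exactly this regime R-NN is rescued by the NN branch you dropped. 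The paper's proof keeps both branches and balances them: when OPT's value comes from $n'$ rewards, the NN branch contributes $\tfrac{1}{2n'}$, the random branch contributes $\tfrac{n'}{2n}\cdot\tfrac{\log n'}{n'}$, and the worst case $n'\log n'\approx n$, i.e.\ $n'=\Theta(n/\log n)$, yields $\Omega(\log n/n)$. Any correct proof must contain this trade-off; a bound purely on $\sum_j R_j$ cannot.

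Second, the step where your $\log n$ is ``born'' --- the harmonic charging over OPT's ranks --- is precisely the step you leave unproved, and you yourself flag the obstruction (NN diverging from OPT's suffix) without resolving it. The paper's $\log n$ comes from an entirely different mechanism, with no charging against OPT's order at all: prune all edges longer than $\theta=x/\sqrt{n}$ with $x=\log_{1/\gamma}2$; by Lemma \ref{lemma:theta} the relevant rewards fall into at most $\sqrt{n}$ connected components, so at most $\sqrt{n}\log n$ rewards lie in components with fewer than $\log n$ rewards, and a uniformly random start lands in a \emph{large} component with probability at least $1/2$. Then a doubling recurrence (Lemma \ref{lemma:budget1}: the nearest unvisited reward is within $d_i+\theta$ of NN's position, hence $d_{i+1}\le 2d_i+\theta$, so NN's excursions grow at most geometrically) forces NN, before exhausting a travel budget of $x$, to collect $r$ rewards satisfying $\sqrt{n}\le 2^{r+2}\log^2 n$ (Lemma \ref{lemma:sumri2ri}), i.e.\ $r=\Omega(\log n)$ at only a constant-factor discount cost. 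This pruning-plus-doubling argument is how the paper controls NN's myopia --- the greedy step can at most double the distance from the start while small components contribute few rewards --- whereas your per-rank recovery claim is left as an unproved conjecture that, by the counterexample above, would in any case have to be restated conditionally on OPT's value being spread over many rewards. As it stands, the proposal does not establish the theorem.
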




While the improvement over NN may seem small ($\log (n)$) the observation that stochasticity improves the performance guarantees of local policies is essential to our work.\\
In the following sections, we derive more sophisticated randomized algorithms with better performance guarantees.
\subsection{Impossibility Results}
\subsubsection{Deterministic Local Policies}
In the previous subsection, we saw that the NN heuristic guarantees performance of at least $\frac{\text{OPT}}{n}.$ Next, we show an impossibility result for all deterministic local policies, indicating that no such policy can guarantee more than $\frac{\text{OPT}}{n},$ which makes NN optimal over such policies. 
\begin{theorem}[Impossibility for Deterministic Local Policies] 
\label{theo:2det}
For any deterministic local policy D-Local, there exists a  graph with $n$ nodes and a discount factor $\gamma = 1-\frac{1}{n}$ such that: {\Large \enspace \enspace $ \frac{\text{D-Local}}{\text{OPT}} \leq \frac{24}{n}. $}
\end{theorem}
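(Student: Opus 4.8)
The plan is to construct, for an arbitrary but fixed deterministic local policy $D$, a single \emph{star-shaped} metric on the $n$ reward-nodes on which $D$ is forced to wander a long time before reaching the bulk of the reward. Fix a center $s_0$ and $m=\Theta(\log n)$ ``legs,'' each a path of length $\ell=\Theta(n)$, so that every reward sits at distance $\ell$ from $s_0$. On $m-1$ of the legs I place a single \emph{decoy} reward at the leg end, and on the remaining leg I place a tight \emph{cluster} of $c=n-(m-1)=\Theta(n)$ rewards at mutual distance $\approx 0$. The induced distances are those of the star tree (two rewards on different legs are at distance $2\ell$, two cluster rewards at distance $\approx 0$), which satisfy the triangle inequality, so this is a legal RD-TSP instance. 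The lower bound on OPT is then immediate: the tour that walks straight to the cluster leg and sweeps it collects at least $c\,\gamma^{\ell}$, and since $\ell=\Theta(n)$ and $\gamma=1-\tfrac1n$ we have $\gamma^{\ell}=\Omega(1)$, giving $\text{OPT}=\Omega(n)$.

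The heart of the argument is an indistinguishability step that forces $D$ to reach the cluster \emph{last}. Because $D$ observes only the vector $\{V_i(x)\}=\{\gamma^{d_{i,x}}\}$ of distances from its \emph{current} node (plus the history), and because from $s_0$ every reward lies at distance $\ell$, the observation at $s_0$ is identical no matter which leg carries the cluster. I consider the family of instances $G_j$ (cluster on leg $j$) and argue by induction on the steps of $D$ that, as long as $D$ has not yet targeted a reward on the cluster leg, its observations -- and hence its deterministic choices -- coincide across all the $G_j$ that agree so far: after collecting a decoy at the end of some leg $j_k$, every still-uncollected reward (decoy or cluster) lies at distance $2\ell$ from the current leg end, so the observation is again cluster-location independent. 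Consequently the order $j_1,\dots,j_{m-1}$ in which $D$ visits the decoy legs is determined \emph{independently} of the cluster's position, and I may define the true instance to be the one whose cluster sits on the last-visited leg $j_m$. On this instance $D$ provably traverses $m-1$ decoy legs before ever reaching the cluster.

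It remains to account the collected value. Decisions are made only at reward states (Definition \ref{def:2}), and in the star metric moving from one leg end to another costs $2\ell$, so after visiting $k$ decoy legs $D$ has walked $(2k-1)\ell$, reaching the cluster only after walking $(2m-1)\ell=\Theta(n\log n)$. The decoys contribute at most $\sum_{k=1}^{m-1}\gamma^{(2k-1)\ell}=O(1)$, a geometric series with ratio $\gamma^{2\ell}=\Theta(1)<1$, while the cluster contributes at most $c\,\gamma^{(2m-1)\ell}\le (c/n)\,e^{O(1)}=O(1)$ once $\ell$ and $m$ are chosen so that $(2m-1)\ell\ge n\ln n$. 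Hence $D=O(1)$ whereas $\text{OPT}=\Omega(n)$, and tuning $\ell=\Theta(n)$, $m=\Theta(\log n)$ and the hidden constants yields the stated ratio $\text{D-Local}/\text{OPT}\le 24/n$ under $\gamma=1-\tfrac1n$.

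I expect the main obstacle to be the indistinguishability/consistency step: one must verify rigorously that a single fixed instance realizes the worst case -- that is, that the adaptive choice of the cluster leg never alters any observation $D$ receives before it first targets a reward on that leg, so that $D$'s leg-visiting order is genuinely cluster-independent -- and then track the two geometric sums tightly enough to extract the explicit constant $24$ rather than a mere $O(1/n)$ bound.
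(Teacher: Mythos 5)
Your overall architecture is the same as the paper's: an adversarial star family in which all rewards are equidistant from the start, an indistinguishability argument forcing the deterministic policy to burn its discount on decoys before reaching a hidden cluster, and a pair of geometric sums showing D-Local $=O(1)$ versus $\text{OPT}=\Omega(n)$. (The paper uses $n$ leaves at distance $d$ with $\gamma^d=\tfrac12$ and hides an $n/2$-clique of unit edges; you use $\Theta(\log n)$ legs of length $\Theta(n)$ with $\Theta(\log n)$ decoys, which is a fine parameterization and does yield the constant after tuning.) But there is a genuine flaw in your indistinguishability step as you have set it up: you index the family $\{G_j\}$ by the \emph{leg} carrying the cluster, keeping fixed \emph{which rewards} are cluster members. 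A local policy does not choose legs; it chooses options $o_i$, i.e., it targets reward \emph{indices}, and it knows the index set. If rewards $1,\dots,c$ form the cluster in every $G_j$, then the deterministic local policy ``always select $o_1$'' sees the identical observation $(\gamma^{\ell},\dots,\gamma^{\ell})$ at $s_0$ in every instance, walks distance $\ell$ straight to the cluster (wherever its leg is), and collects $\Omega(n)$ on \emph{every} graph in your family. Your inductive claim that ``the order $j_1,\dots,j_{m-1}$ in which $D$ visits the decoy legs is determined independently of the cluster's position'' silently assumes the policy only ever targets decoys before the cluster leg, which is exactly what fails here: indistinguishability of the cluster's \emph{location} is worthless when the cluster's \emph{membership} is common knowledge across the family.

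The fix is precisely the move the paper makes: let the family range over which \emph{subset of rewards} is interconnected (in your construction, over the choice of the $m-1$ decoy indices and their assignment to legs, with the remaining $c$ indices forming the cluster on the leftover leg). Then at $s_0$, and at every decoy leaf end, all uncollected rewards are mutually equidistant, so the deterministic policy's sequence of distinct targeted indices $i_1,i_2,\dots$ is the same across all instances consistent with the history, and the adversary may realize $i_1,\dots,i_{m-1}$ as the decoys; only the $m$-th distinct target is forced into the cluster, after travel $(2m-1)\ell$. With that repair your accounting goes through as written (decoy sum $\gamma^{\ell}/(1-\gamma^{2\ell})=O(1)$, cluster term $c\,\gamma^{(2m-1)\ell}=O(1)$ once $(2m-1)\ell\ge n\ln n$), and the constant $24$ is attainable. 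One further caveat: your argument, like the paper's, implicitly requires that options are executed to completion (decisions only at reward states). If a policy could switch options mid-edge, one step down a leg reveals that leg's entire reward set through the vector $\{V_i(x)\}$, and a cheap probing policy would defeat both constructions; so you should state explicitly that you work in the paper's SMDP model where $o_i$ is uninterruptible.
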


\textit{Proof sketch.} Consider a family of graphs, ${\cal G}$, each of which consists of a star with a central vertex and $n$ leaves at a distance $d$.
The starting vertex is the central vertex, and there is a reward at each leaf. Each graph of the family ${\cal G}$ 
corresponds to a different subset of $n/2$ of the leaves which we connect (pairwise) by edges of length $1\ll d$. (The other $\frac{n}{2}$ leaves are only connected to the central vertex.) While at the central vertex, local policy cannot distinguish among the $n$ rewards (they all at the same distance from the origin), and therefore its choice is the same for all graphs in ${\cal G}$. It follows that, for any given policy, there exists a graph in ${\cal G}$ such that the adjacent $n/2$ rewards are visited last. It follows from simple algebra that $\frac{\text{D-Local}}{\text{OPT}}\le \frac{24}{n}$. $\qed$

\subsubsection{Stochastic Local Policies}
In the previous subsection, we saw that deterministic local policies could only guarantee $\frac{\text{OPT}}{n}.$ We then showed that NN is optimal over such policies and that a small stochastic adjustment can improve its guarantees. These observations motivated us to look for better local policies in the broader class of stochastic local policies. We begin by providing a better impossibility result for such policies in Theorem \ref{theorem:sto_loc_lower}.

\begin{theorem}[Impossibility for Stochastic Local Policies]
\label{theorem:sto_loc_lower}
For each stochastic local policy S-Local, there exists a  graph with $n$ nodes and a discount factor $\gamma = 1-\frac{1}{\sqrt{n}}$ such that:{\Large $\enspace \enspace  \frac{\text{S-Local}}{\text{OPT}} \leq \frac{8}{\sqrt{n}}. $}
\end{theorem}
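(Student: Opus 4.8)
\emph{Proof plan.} My plan is to adapt the star construction from Theorem~\ref{theo:2det}, but with two essential modifications dictated by the change from $\gamma=1-1/n$ to $\gamma=1-1/\sqrt n$: I shrink the ``good'' cluster to size $\sqrt n$, and I argue against a \emph{uniformly random} choice of the cluster rather than a worst-case one. The latter is forced on us, since a stochastic policy has no fixed visiting order that an adversary can target; instead I fix a hard distribution over inputs and invoke Yao's principle. Concretely, let the start $s_0$ be the center of a star with $n$ reward-leaves, each at distance $d=\sqrt n$ from the center, and let $T$ be a subset of exactly $\sqrt n$ leaves that are mutually at distance $1$, all other inter-leaf distances being $2d=2\sqrt n$, i.e. via the center. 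One checks this is a valid shortest-path metric. The instance $G_T$ is determined by $T$, and I draw $T$ uniformly among all $\binom{n}{\sqrt n}$ such subsets.

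First I would pin down $\text{OPT}$. The optimal tour walks to any leaf of $T$ (distance $d$) and then sweeps the whole cluster along unit edges, collecting its $\sqrt n$ rewards at consecutive distances; since $\gamma^{\sqrt n}\to e^{-1}$ and $1-\gamma=1/\sqrt n$, this yields $\text{OPT}\ge \gamma^{d}\sum_{j=0}^{\sqrt n-1}\gamma^{j}=\Theta(\sqrt n)$, and by symmetry this value is the same for every $T$. This is the denominator.

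The heart of the argument is an upper bound on the expected reward of an arbitrary \emph{deterministic} local policy $D$ over the random $T$. The key structural observation is an indistinguishability claim: at the center every leaf looks identical (all at distance $d$), and at any already-visited leaf that is \emph{not} in $T$ the local values are again identical across all instances (every other leaf sits at distance $2d$, and the cheap edges inside $T$ are invisible from outside $T$). Hence, until $D$ first steps onto a leaf of $T$, its sequence of probed leaves $a_1,a_2,\dots$ is a fixed sequence independent of $T$. I would then split $D$'s return into the reward collected \emph{before} entering $T$ and the reward collected \emph{after}. The ``before'' part is collected at cumulative distances $d,3d,5d,\dots$, so its value is at most $\sum_{k\ge1}\gamma^{(2k-1)d}=\gamma^{d}/(1-\gamma^{2d})=O(1)$ for \emph{every} $T$ (here $\gamma^{2d}\le e^{-2}<1$ is exactly what makes probing expensive, and is where $d=\sqrt n$ is used). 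For the ``after'' part, if the first $T$-leaf is reached at probe index $K$, the cumulative distance is $\tau=(2K-1)d$, and everything collected afterwards is bounded by $\gamma^{\tau}\cdot\frac{1}{1-\gamma}=\gamma^{\tau}\sqrt n$. Because $T$ is a uniform $\sqrt n$-subset and the probe order is fixed, the index $K$ of the first $T$-leaf has an essentially geometric law, $\Pr[K=k]\le 2(1-1/\sqrt n)^{k-1}/\sqrt n$, whence $\mathbb{E}_T[\gamma^{\tau}]\le c/\sqrt n$ for a constant $c$ (the factor $\gamma^{2d}\le e^{-2}$ makes the sum converge and extracts the $1/\sqrt n$ from the per-probe success probability). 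Multiplying by $\sqrt n$ gives an $O(1)$ bound on the expected ``after'' reward, so the expected total return of $D$ is $O(1)$.

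Finally I would lift this to stochastic policies: a stochastic local policy is a mixture of deterministic ones, so exchanging the two expectations gives $\mathbb{E}_T[\text{S-Local}(G_T)]=\mathbb{E}_{\omega}\,\mathbb{E}_T[\text{reward of }D_\omega]=O(1)$; hence there is a particular $T^\star$ on which $\text{S-Local}$ collects only $O(1)$ in expectation, while $\text{OPT}=\Theta(\sqrt n)$, giving the ratio $O(1/\sqrt n)$, which a careful accounting of the constants sharpens to $8/\sqrt n$. I expect the main obstacle to be the indistinguishability/fixed-probe-order claim: one must argue carefully that the local information a policy receives before touching $T$ cannot depend on $T$, so that the visiting order is well defined and $T$-independent, and then pair this with the geometric tail estimate for $K$ and the metric-realizability check, which are where the precise constants are won or lost.
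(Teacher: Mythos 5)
Your proposal is correct and follows essentially the same route as the paper's proof: the same star-plus-$\sqrt{n}$-clique family of instances, the same indistinguishability observation fixing the probe order until the clique is hit, and the same Yao's-principle reduction to a deterministic policy whose first hit of the clique has a geometric-type law with per-probe success probability $O(1/\sqrt{n})$ and a constant per-miss discount $\gamma^{2d}$, yielding an $O(1)$ expected return against $\mathrm{OPT}=\Theta(\sqrt{n})$. The only differences are cosmetic: you fix $d=\sqrt{n}$ (so $\gamma^{d}\approx e^{-1}$) where the paper chooses $d$ with $\gamma^{d}=1/2$, and you bound the pre-clique reward explicitly by $\gamma^{d}/(1-\gamma^{2d})=O(1)$ where the paper simply notes it may be neglected.
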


The proof (in the supplementary material) is similar to the previous one but considers a family of graphs where $\sqrt{n}$ leaves are connected to form a clique (instead of $n/2$). 

We do not have a policy that achieves this lower bound, but we now propose and analyze two stochastic policies (in addition to the R-NN) that substantially improve over the deterministic upper bound. As we will see, these policies satisfy the Occam's razor principle, i.e., policies with better guarantees are also more complicated and require more computational resources. 

\subsection{NN with Randomized DFS (RDFS)}
We now describe the NN-RDFS policy (Algorithm \ref{alg:sp-dfs}), the best performing local policy we were able to derive. The policy performs NN with probability $0.5$ and local policy which we call RDFS with probability $0.5$. RDFS starts at a random node and continues by performing a DFS on edges shorter than $\theta$, where $\theta$ is chosen at random as we specify later. When it runs out of edges shorter than $\theta$ then RDFS continues by performing NN. 

\begin{algorithm}[h]
\caption{NN with RDFS}
\label{alg:sp-dfs}
\begin{algorithmic}
\STATE {\bfseries Input:} Graph G, with $n$ nodes, and $s_0$ the first node
\STATE Let $x=\mbox{log}_{\frac{1}{\gamma}}(2)$
\STATE Flip a coin 
\IF[Perform RDFS]{outcome = heads} 
    \STATE Visit a node at random, denote it by $s_1$ 
    \STATE Choose at random $ i \sim \text{Uniform} \{ 1,2, ... ,\mbox{log}_2(n) \} $ 
    \STATE Fix $n' =\frac{n}{2^i} $ and set $\theta = \frac{x}{\sqrt{n'}} $
    \STATE Initiate a DFS from $s_1$ on edges shorter than $\theta$ 
\ENDIF
\STATE Follow by executing NN 
\end{algorithmic}
\end{algorithm}

The performance guarantees for the NN-RDFS method are stated in Theorem \ref{theo:nn-dfs}. The analysis is conducted in three steps. In the first two steps, we assume that OPT achieved a value of $\Omega (n')$ by collecting $n'$ rewards at a segment of length $x \le \mbox{log}_\frac{1}{\gamma} (2)$. The first step considers the case where $n' = \Omega(n)$, and in the second step we remove this requirement and analyze the performance of NN-RDFS for the worst value of $n'$. The third step
considers all the value collected by OPT (not necessarily in a segment of length $x$) and completes the proof. In the second and the third steps we loose two logarithmic factors. One since we use a segment of length $x$ in which OPT collects value of at least $\text{OPT} / \log (n)$, and the second for guessing a good enough approximation for $n'$ (for setting $\theta$).

\begin{theorem}[NN-RDFS Performance]
\label{theo:nn-dfs}
For any instance of RD-TSP with $n$ rewards
$$
\frac{\text{NN-RDFS}}{\text{OPT}} \ge 
\begin{cases} 
\Omega \left( \frac{n^{-\frac{1}{2}}}{ \log^2(n)} \right), &\text{if} \enspace \text{OPT} = \Omega (n). \\
\Omega \left( \frac{n^{-\frac{2}{3}}}{\log^2(n)} \right), &\text{otherwise}. \\
\end{cases} 
$$
\end{theorem}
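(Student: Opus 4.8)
The plan is to follow the three-step outline, organized around the scale $x=\log_{1/\gamma}(2)$, for which $\gamma^{x}=\tfrac12$, so that every unit-$x$ of tour length halves the discount. I would first reduce the global guarantee to a single heavy segment of $\text{OPT}$'s tour. Partition the optimal tour into consecutive blocks $S_0,S_1,\dots$, where $S_k$ holds the rewards whose cumulative tour-distance from $s_0$ lies in $[kx,(k{+}1)x)$; each reward of $S_k$ has discounted value in $[2^{-k-1},2^{-k}]$, so $S_k$ contributes $\Theta(n_k2^{-k})$ to $\text{OPT}$ with $n_k=|S_k|$. Since $\sum_k n_k\le n$, the blocks with index at least $\lceil\log_2 n\rceil$ together contribute at most a constant, so one of the $O(\log n)$ leading blocks, say $S_{k^*}$ with $n':=n_{k^*}$ rewards located at cumulative distance $D:=k^*x$, carries value $V:=\Theta(n'\gamma^{D})=\Omega(\text{OPT}/\log n)$. (The degenerate case of super-polynomially small $\text{OPT}$, where the value is essentially concentrated in one reward, is dominated by $\text{NN}$'s first pick and handled separately.) This is the first of the two logarithmic losses.

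The heart of the argument is a geometric lemma about $S_{k^*}$, analyzed via the $\text{RDFS}$ branch. The $n'$ rewards of $S_{k^*}$ are consecutive on the tour and their consecutive gaps sum to at most $x$; with $\theta=x/\sqrt{n'}$, a Markov bound shows fewer than $\sqrt{n'}$ of these gaps exceed $\theta$, so deleting the long gaps splits the rewards into at most $\sqrt{n'}{+}1$ short-edge-connected intervals. An averaging argument over interval sizes (rather than keeping only the largest) shows that a constant fraction of the $n'$ rewards lie in intervals of size $\Omega(\sqrt{n'})$, so a uniformly random start $s_1$ lands in such an interval with probability $\Omega(n'/n)$. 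Conditioned on this, the $\text{DFS}$ on edges shorter than $\theta$ reaches its $t$-th new reward after travel at most $2t\theta$ (each tree edge is crossed at most twice, with length at most $\theta$); taking $t$ up to $\Omega(\sqrt{n'})$ keeps the travel at $O(\sqrt{n'}\,\theta)=O(x)$, so each of these rewards is collected with discount $\Omega(1)$. Because $s_1$ is reachable within tour-distance $D{+}x$, its discount from $s_0$ satisfies $\gamma^{d_{s_0,s_1}}\ge\gamma^{D}/2$, so a good start yields value $\Omega(\sqrt{n'}\,\gamma^{D})$. Multiplying by the good-start probability $\Omega(n'/n)$, by $\tfrac12$ for the branch, and by $1/\log_2 n$ for guessing the scale $n'$ within a factor $2$ (the second logarithmic loss) gives $\text{RDFS}=\Omega\!\big(\sqrt{n'}\,V/(n\log n)\big)=\Omega\!\big(\sqrt{n'}\,\text{OPT}/(n\log^2 n)\big)$.

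Finally I combine this with the $\text{NN}$ branch and optimize over the unknown $n'$. The first reward of $\text{NN}$ is the globally nearest one, whose value is at least that of any reward of $S_{k^*}$, hence at least $\gamma^{D+x}=\gamma^{D}/2=\Omega(V/n')=\Omega(\text{OPT}/(n'\log n))$; so the $\text{NN}$ branch alone guarantees ratio $\Omega(1/(n'\log n))$. Running $\text{NN}$ and $\text{RDFS}$ each with probability $\tfrac12$, $\text{NN-RDFS}$ therefore achieves ratio $\Omega\!\big(\max\{\,1/(n'\log n),\ \sqrt{n'}/(n\log^2 n)\,\}\big)$. This maximum is smallest when the two terms balance, at $n'=\Theta(n^{2/3})$ up to logarithmic factors, which gives the general bound $\Omega(n^{-2/3}/\log^2 n)$; and when $\text{OPT}=\Omega(n)$ the heavy block must have $n'\ge V=\Omega(n/\log n)$, forcing the $\text{RDFS}$ term to dominate and yielding $\Omega(n^{-1/2}/\log^2 n)$.

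I expect the main obstacle to be the geometric lemma, and specifically the calibration $\theta=x/\sqrt{n'}$: it is tuned so the Markov bound yields about $\sqrt{n'}$ long gaps (hence intervals of size $\Omega(\sqrt{n'})$) while simultaneously making the per-reward $\text{DFS}$ cost $\theta$ small enough that $\sqrt{n'}$ rewards fit within a single length-$x$ budget. Making both sides of this tradeoff hold at once, together with the averaging step that boosts the good-start probability to $\Omega(n'/n)$ rather than the naive $\Omega(\sqrt{n'}/n)$, is exactly what produces the $\sqrt{n'}$ gain; the remaining pieces---bounding the start-up discount and absorbing the two guessing losses---are then routine.
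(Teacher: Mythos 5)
Your proposal is correct and is essentially the paper's own proof, modulo reordering: the dyadic-block reduction to one length-$x$ segment carrying $\Omega(\text{OPT}/\log n)$ is the paper's Step 3 done first; the pruning at $\theta=x/\sqrt{n'}$ with the counting bound on gaps longer than $\theta$ is the paper's Lemma 1; the DFS budget argument ($t$ rewards within travel $2t\theta$, hence $\Omega(\sqrt{n'})$ rewards at $O(1)$ extra discount) and the NN/RDFS balance at $n'\approx n^{2/3}$, with the two logarithmic losses placed exactly where the paper places them, all match. Your one real deviation is the averaging step: where the paper splits into two cases (components of size at least or at most $x/\theta$, handling the small ones via Jensen's inequality), you observe directly that with at most $\sqrt{n'}$ components, those of size below $\sqrt{n'}/4$ hold at most about $n'/4$ rewards, so a constant fraction of rewards lie in components of size $\Omega(\sqrt{n'})$. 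This is valid and a little cleaner at the balanced threshold; the paper's two-case version is what it reuses verbatim for NN-RA, where $\theta=x/(n')^{2/3}$ and the two cases no longer coincide.

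One quantitative slack to repair: in the $\text{OPT}=\Omega(n)$ branch, your chain $n'\ge V=\Omega(n/\log n)$ yields only
$\sqrt{n'}/(n\log^2 n)=\Omega\bigl(n^{-1/2}/\log^{5/2} n\bigr)$,
a factor $\sqrt{\log n}$ short of the stated $\Omega\bigl(n^{-1/2}/\log^2 n\bigr)$. The fix is easy with your own block decomposition: if $\text{OPT}\ge cn$, the blocks of index larger than the constant $\log_2(4/c)$ contribute at most $cn/4+O(1)$ in total, so some \emph{constant-index} block must contain $n'=\Omega(n)$ rewards and carry a constant fraction of $\text{OPT}$; this eliminates the segment-selection log in that case and gives $\Omega\bigl(n^{-1/2}/\log n\bigr)$, comfortably within the claimed bound. (The paper's Step 1 implicitly does exactly this by assuming OPT collects $\alpha n$ rewards, $\alpha$ constant, in a single length-$x$ segment.)
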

\begin{proof}
\textbf{Step 1.} Assume that OPT collects   a set $S_\text{OPT}$ of $\alpha n$  rewards for some $0\le \alpha \le 1$, in a segment $p$ of length $x = \mbox{log}_\frac{1}{\gamma} (2)$ (i.e. $x$ is the distance from the first reward to the last reward -- it does not include the distance from the starting point to the first reward). Let $d_{\text{min}},d_{\text{max}}$ the shortest and longest distances from $s_0$ to a reward in $S_\text{OPT}$ respectively. By the triangle inequality, $d_{\text{max}} - d_{\text{min}} \le x.$ 
We further assume  that $OPT \le O(\gamma^{d_\text{min}}\alpha n)$ (i.e., That is the value that OPT collects from rewards which are not in $S_\text{OPT}$ is negligible). We now show that RDFS is $\Omega({\sqrt{n}})$ for $\theta=x/\sqrt{n}$. We start with the following Lemma. 

\begin{lemma}
\label{lemma:theta}
For any path $p$ of length $x$, and $\forall \theta \in [0,x]$, there are less than $\frac{x}{\theta}$ edges in $p$ that are larger than $\theta$. 
\end{lemma}
\textit{Proof.} For contradiction, assume there are more than $\frac{x}{\theta}$ edges longer than $\theta$. The length of $p$ is given by 
\begin{multline*}
\sum_i p_i = \sum _{p_i \le \theta } p_i + \sum _{p_i > \theta} p_i \ge 
\sum _{p_i \le \theta } p_i + \frac{ x}{\theta}\theta > x
\end{multline*}
thus a contradiction to the assumption that the path length is at most $x$.  $\qed$

Lemma \ref{lemma:theta} assures that after pruning all edges larger than $\theta$ (from the graph), there are at most $\frac{x}{\theta}$ Connected Components (CCs) $\{ C_j \}_{j=1}^\frac{x}{\theta}$ in $S_\text{OPT}.$ In addition, it holds that $\sum _{j=1}^{\frac{x}{\theta}} |C_j| = \alpha n,$ and all the edges inside any connected component $C_j$ are shorter than $\theta$. 

Next, we (lower) bound the total gain of RDFS. Say that RDFS starts at a reward in component $C_j$. Then, since all edges in $C_j$ are shorter than $\theta$, it collects either all the rewards in $C_j$, or at least $x/2\theta$ rewards. Thus, RDFS collects  $\Omega \left( \mbox{min} \{ |C_j|, \frac{x}{\theta}\} \right)$ rewards. \\
Notice that the first random step leads RDFS to a vertex in CC $C_j$ with probability $\frac{|C_j|}{n}$. If more than half of rewards are in CCs s.t $|C_j| \ge \frac{x}{\theta},$ then
\begin{align*}
\mbox{RDFS} \ge & \gamma^{d_\text{max}}\sum_{j=1}^{\frac{x}{\theta}} \frac{|C_j|}{n} \cdot \mbox{min}\left\{ |C_j|,\frac{x}{\theta} \right\} \\
\ge & \gamma^{d_\text{max}}\sum_{j: |C_j| \ge \frac{x}{\theta} } \frac{|C_j|}{n} \cdot \frac{x}{\theta} \ge \gamma^{d_\text{max}}\frac{\alpha x}{2 \theta}.
\end{align*}
If more than half of rewards in $S_\text{OPT}$ are in CCs such that $|C_j| \le \frac{x}{\theta},$ let $s$ be the number of such CCs and notice that $s \le \frac{x}{\theta}.$ We get that: 
\begin{align*}
\mbox{RDFS} & = \gamma^{d_\text{max}} \sum_{j=1}^{\frac{x}{\theta}} \frac{|C_j|}{n} \cdot \mbox{min}\left\{ |C_j|,\frac{x}{\theta} \right\}  \\ 
& \ge  \gamma^{d_\text{max}} \sum_{j: |C_j| \le \frac{x}{\theta} } \frac{|C_j|^2}{n} \ge \frac{s}{n} \gamma^{d_\text{max}} \left( \frac{1}{s} \sum_{j=1}^s  |C_j|^2 \right)  \\ 
& \underset{\mbox{Jensen}}{\ge}  \frac{s}{n} \gamma^{d_\text{max}}  \left( \frac{1}{s}\sum_{j=1}^s |C_j| \right) ^2 \ge  \gamma^{d_\text{max}} \frac{\theta \alpha^2 n}{4x}. 
\end{align*}

By setting $\theta = \frac{x}{\sqrt{n}}$ we guarantee that the value of RDFS is at least $\gamma^{d_\text{max}}\alpha^2 \sqrt{n} / 4$. Since $d_\text{max}-d_\text{min}\leq x$,
$$
\frac{\text{RDFS}}{\text{OPT}} \ge \frac{ \gamma^{d_\text{max}} \alpha^2 \sqrt{n} / 4}{\gamma^{d_\text{min}} \alpha n} \ge \frac{\alpha \gamma ^{x}}{4\sqrt{n}} = \frac{\alpha}{2\sqrt{n}} ,
$$ where the last inequality follows from the triangle inequality. 

\textbf{Step 2.} 
Assume that OPT gets its value from $n'<n$ rewards that it collects in a segment of length $x$ (and from all other rewards OPT collects a negligible value). Recall that the NN-RDFS policy is either NN with probability $0.5$ or RDFS with probability $0.5$. By picking the single reward closest to the starting point, NN gets at least $1/n'$ of the value of OPT. Otherwise, with probability $n'/n$, RDFS starts with one of the $n'$ rewards picked by OPT and then, by the analysis of step 1, if it sets $\theta = \frac{x}{\sqrt{n'}}$, RDFS collects  $\frac{1}{2\sqrt{n'}} $  of the value collected by OPT (we use Step (1) with $\alpha =1$). It follows that $$
\frac{\text{NN-RDFS}}{\text{OPT}} \ge \frac{1}{2} \cdot \frac{1}{n'} + \frac{1}{2} \cdot \frac{n'}{n} \cdot \frac{1}{2\sqrt{n'}}  = \frac{1}{2n'} + \frac{\sqrt{n'}}{4n}.
$$

This lower bound is smallest when $n' \approx n^{\frac{2}{3}}$, in which case NN-RDFS collect $\Omega(n^{-2/3})$ of OPT.  

First, notice that since $n'$ is not known to NN-RDFS, it has to be guessed in order to choose $\theta$. This is done by setting $n'$ at random from $n' = n / 2 ^i, i \sim \text{Uniform} \{ 1,2, ..., \mbox{log}_2(n) \} $. This guarantees that with probability $\frac{1}{\log(n)}$ our guess for $n'$ will be off of its true value by a factor of at most $2$. With this guess we will work with an approximation of $\theta$ which is off of its true value by a factor of at most ${\sqrt{2}}$. These approximations degrade our bounds by a factor of $\log (n).$

\textbf{Step 3.}
Finally, we consider the general case where OPT may collect its value in a segment of length larger than $x$. Notice that the value which OPT collects from rewards that follow the first $\log_2 (n)$ segments of length $x$ in its tour is at  most $1$ (since $\gamma ^{\log_{2} (n)\cdot x} = \frac{1}{n}$). This means that there exists at least one segment of length $x$ in which OPT collects at least  $\frac{\text{OPT}}{\log_{2}(n)}$ of its value. Combining this with the analysis in the previous step, the proof is complete. 
\end{proof}

\subsection{NN with a Random Ascent (RA)}
We now describe the NN-RA policy (Algorithm \ref{alg:nn-bfs}). Similar in spirit to NN-RDFS, the policy performs NN with probability $0.5$ and local policy which we call RA with probability $0.5$. RA starts at a random node, $s_1$, sorts the nodes in increasing order of their distance from $s_1$ and then visits all other nodes in this order. The algorithm is simple to implement, as it does not require guessing any parameters (like $\theta$ which RDFS has to guess). However, this comes at the cost of a worse worst case bound.

\begin{algorithm}[h]
\caption{NN with RA}
\label{alg:nn-bfs}
\begin{algorithmic}
\STATE {\bfseries Input:} Graph G, with n nodes, and $s_0$ the first node
\STATE Flip a coin, 
\IF[Perform RA]{outcome = heads}
    \STATE Visit a node at random, denote it by $s_1$
    \STATE Sort the remaining nodes by increasing distances from $s_1$, call this permutation $\pi$ 
    \STATE Visit the nodes in $\pi$ in increasing order 
\ELSE
\STATE  Execute NN 
\ENDIF
\end{algorithmic}
\end{algorithm}

The performance guarantees for the NN-RA method are given in Theorem \ref{theo:nn-bfs}. The analysis follows the same steps as the proof of the NN-RDFS algorithm.
We emphasize that here, the pruning parameter $\theta$ is only used for analysis purposes and is not part of the algorithm. 
Consequently, we see only one logarithmic factor in the performance bound of Theorem~\ref{theo:nn-bfs} in contrast with two in Theorem~\ref{theo:nn-dfs}.

\begin{theorem}[NN-RA Performance]
\label{theo:nn-bfs}
For a discounted reward traveling salesman graph with $n$ nodes and for any discount factor $\gamma$:
$$
\frac{\text{NN-RA}}{\text{OPT}} \ge 
\begin{cases} 
\Omega \left( \frac{n^{-\frac{2}{3}}}{ \log(n)} \right), &\text{if} \enspace \text{OPT} = \Omega (n). \\
\Omega \left( \frac{n^{-\frac{3}{4}}}{\log(n)} \right), &\text{otherwise}. \\
\end{cases} 
$$
\end{theorem}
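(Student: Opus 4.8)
I would mirror the three-step proof of Theorem~\ref{theo:nn-dfs} almost verbatim, replacing the analysis of the DFS sub-routine by an analysis of the distance-sorted traversal performed by RA, and carefully tracking that only a single (rather than a double) logarithmic factor is lost. As in Theorem~\ref{theo:nn-dfs}, the pruning threshold $\theta$ will appear, but now purely as an analytic device (RA never prunes), which is precisely why the second logarithm disappears.

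\textbf{Step 1 and the main obstacle.} I would first assume, exactly as in Step~1 of Theorem~\ref{theo:nn-dfs}, that OPT collects a set $S_{\text{OPT}}$ of $\alpha n$ rewards inside a segment $p$ of length $x=\log_{1/\gamma}(2)$, with $d_{\text{max}}-d_{\text{min}}\le x$ and $\text{OPT}=O(\gamma^{d_{\text{min}}}\alpha n)$. Fixing a threshold $\theta$, Lemma~\ref{lemma:theta} again guarantees that deleting the edges longer than $\theta$ splits $S_{\text{OPT}}$ into at most $x/\theta$ connected components $\{C_j\}$ with $\sum_j |C_j|=\alpha n$. The one genuine difference from the DFS analysis is how many rewards RA harvests from the component $C_j$ containing its random start $s_1$: since RA visits nodes in order of their distance from $s_1$ instead of following short edges, collecting the $m$ nearest rewards of a spatially extended component (worst case: a chain with spacing $\theta$) forces a zig-zag whose cumulative length is $\Theta(m^2\theta)$, rather than the $\Theta(m\theta)$ paid by DFS. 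Keeping the travelled distance below $x$, so that the discount stays at least $\gamma^{x}=\frac{1}{2}$, therefore lets RA collect only $\Omega\!\left(\min\{|C_j|,\sqrt{x/\theta}\}\right)$ rewards of $C_j$. Establishing this square-root cap rigorously is the step I expect to be the main obstacle: one must argue that the chain is the worst-case component geometry for the sorted traversal and that interleaving with nodes of other components only adds reward, never removes it.

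\textbf{Case split and Step 2.} With the cap $\tau=\sqrt{x/\theta}$ in hand, only the large-component term of the DFS case analysis changes. If at least half the rewards lie in components with $|C_j|\ge\tau$, then $\text{RA}\ge\gamma^{d_{\text{max}}}\frac{\alpha}{2}\tau$; if at least half lie in components with $|C_j|<\tau$, Jensen's inequality together with the bound $x/\theta$ on the number of components gives the same estimate as before, $\text{RA}\ge\gamma^{d_{\text{max}}}\frac{\alpha^2 n\theta}{4x}$. Re-balancing these two bounds gives the optimal analytic choice $\theta\sim x(\alpha n)^{-2/3}$, hence $\tau\sim(\alpha n)^{1/3}$ and $\frac{\text{RA}}{\text{OPT}}=\Omega(n^{-2/3})$ whenever $\alpha=\Omega(1)$. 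In Step~2 I would drop the assumption $\alpha=\Omega(1)$: NN alone collects $\Omega(\text{OPT}/n')$, while with probability $n'/n$ the RA branch starts inside the $n'$ rewards of OPT and then, by Step~1 applied with $\alpha=1$ to those $n'$ rewards, collects $\Omega((n')^{1/3})$ of them, so that $\frac{\text{NN-RA}}{\text{OPT}}=\Omega\!\left(\frac{1}{n'}+\frac{(n')^{1/3}}{n}\right)$. This is minimized at $n'\sim n^{3/4}$, giving $\Omega(n^{-3/4})$, whereas the regime $n'=\Omega(n)$ (which covers $\text{OPT}=\Omega(n)$) already yields $\Omega(n^{-2/3})$ from the RA branch alone.

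\textbf{Step 3.} Finally I would remove the assumption that OPT earns everything in one window of length $x$. Since $\gamma^{x\log_2 n}=1/n$, the reward OPT gathers beyond its first $\log_2 n$ windows of length $x$ totals at most $1$, so one such window carries at least $\text{OPT}/\log_2 n$ of OPT's value; applying Steps~1--2 to that window and paying a single $\log n$ factor completes both cases of the bound. Only one logarithmic factor arises here, in contrast to the two in Theorem~\ref{theo:nn-dfs}, precisely because $\theta$ is used only for the RA analysis and never has to be guessed, so the extra $\log n$ lost by randomizing over $n'$ in the DFS algorithm is absent.
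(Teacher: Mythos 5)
Your proposal follows the paper's proof essentially verbatim: the same three-step structure, the same decomposition of $S_{\text{OPT}}$ into at most $x/\theta$ components via Lemma~\ref{lemma:theta}, the same cap of $\Omega\left(\min\{|C_j|,\sqrt{x/\theta}\}\right)$ collected rewards with the identical large/small case split and Jensen step, the same Step~2 balancing at $n'\approx n^{3/4}$, and the same single-logarithm Step~3 justified by $\theta$ being analysis-only. The one step you flag as the "main obstacle" is dispatched in the paper by an elementary observation that sidesteps any worst-case-geometry argument: in a component whose edges all have length at most $\theta$, the $i$-th vertex in sorted order lies within $(i-1)\theta$ of $s_1$, so the first $k$ vertices are collected within total distance $2\sum_{i=2}^{k}(i-1)\theta\le k^2\theta$ for any component shape, and rewards of other components interleaved in the sorted order only increase the count.
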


\begin{figure*}[b]
\centering
\includegraphics[width=\textwidth]{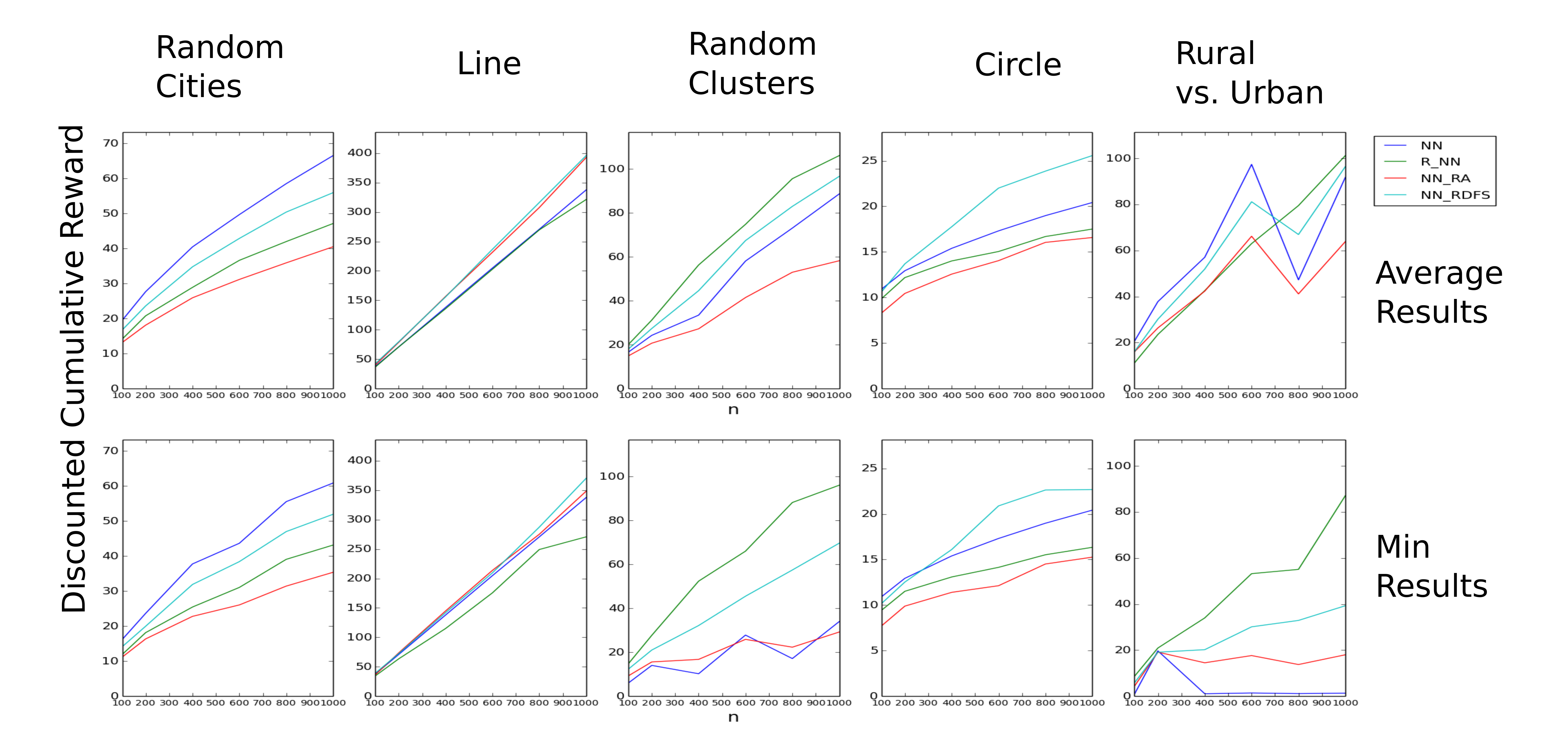}
\caption{Evaluation of deterministic and stochastic local policies over different RD-TSPs. The cumulative discounted reward of each policy is reported for the average and worst case scenarios.}
\label{fig:sims}
\end{figure*}
\section{Simulations}
 
In this section, we evaluate and compare the performance of deterministic and stochastic local policies by measuring the (cumulative discounted) reward achieved by each algorithm on different RD-TSP instances as a function of $n,$ the number of the rewards, with $n\in \{100,200,400,600,800,1000\}.$ For each $n$, we set $\gamma = 1 - \frac{1}{n}$ and 
place the rewards such that OPT can collect almost all of them within a constant discount
 (i.e., $OPT\approx\alpha n$). We always place the initial state $s_0$ at the origin, i.e., $s_0=(0,0)$.
We define $x = \log_{\frac{1}{\gamma}}(2)$, 
and  $\ell=0.01x$ denotes a short distance. Next, we describe five scenarios (Figure \ref{fig:sims}, ordered from left to right) that we considered for evaluation. For each of these graph types, we generate $N_{maps}=10$ different graphs, and report the reward achieved by each algorithm on average over the $N_{maps}$ graphs (Figure \ref{fig:sims}, Top), and in the worst-case (the minimal among the $N_{maps}$ scores) (Figure \ref{fig:sims}, Bottom). As some of our algorithms are stochastic, we report average results, i.e., for each graph we run each algorithm $N_{alg}=100$ times and report the average score. Finally, we provide a visualization of the different graphs and the tours taken by the different algorithms, which helps in understanding our numerical results.  

\textbf{(1) Random Cities.} For a vanilla TSP with $n$ rewards randomly distributed on a $2D$ plane, it is known that the NN algorithm yields a tour which is  $25\%$ longer than optimal on average \cite{johnson1997traveling}.
We used a similar input to compare our algorithms for
 RD-TSP, specifically, we generated a graph with $n$ rewards $r_i\sim(\text{U}(0,x),\text{U}(0,x)),$ where U is the uniform distribution. 
 
Inspecting Figure \ref{fig:sims}, we can see that the NN algorithm performs the best both on the average and in the worst case. This observation suggests that when the rewards are distributed at random, selecting the nearest reward is the most reasonable thing to do. In addition, we can see that NN-RDFS performs the best among the stochastic policies (as predicted by our theoretical results). On the other hand, the RA policy performs the worst among stochastic policies. This happens because sorting the rewards by their distances from $s_1$,  introduces an undesired ``zig-zag'' behavior while collecting rewards at equal distance from $s_1$. 

\textbf{(2) Line.} This graph demonstrates a scenario where greedy algorithms like NN and R-NN are likely to fail. The rewards are located in three different groups; each contains $n/3$ of the rewards. In group 1, the rewards are located in a cluster left to the origin $r_i\sim(\text{U}[-\theta/3-\ell,-\theta/3+\ell],\text{N}(0,\ell)),$ while in group 2 they are located in a cluster right to the origin $r_i\sim(\text{U}[\theta/3-3\ell,\theta/3-2\ell],\text{N}(0,\ell))$ but a bit closer than group 1 ($\theta=\frac{x}{\sqrt{n}}$). Group 3 is also located to the right, but the rewards are placed in increasing distances, such that the $i$-th reward is located at $(\theta/3)2^i$. 

Inspecting the results, we can see that NN and R-NN indeed perform the worst. To understand this, consider the tour that each algorithm takes. NN goes to group 2, then 3 then 1 (and loses a lot from going to 3). The stochastic tours depend on the choice of $s_1$. If it belongs to group 1, they collect group1 then 2 then 3, from left to right, and perform relatively the same. If it belongs to group 3, they will first collect the rewards to the left of $s_1$ in ascending order and then come back to collect the remaining rewards to the right, performing relatively the same. However, if $s_1$ is in group 2, then NN-RDFS, NN-RA will visit group 1 before going to 3, while R-NN is tempted to go to group 3 before going to 1 (and loses a lot from doing it).

\textbf{(3) Random Clusters.} This graph demonstrates the advantage of stochastic policies.
We first randomly place $k=10$ cluster centers $c^j$, $j=1,\ldots,k$ on a circle of radius $x$ (the centers are at a small random (Guassian) distance from the circle)
Then to draw a reward $r_i$ we first draw a cluster center $c^j$ uniformly and then draw $r_i$ such that
$ r_i \sim (\text{U}[c^j_x-10\ell,c^j_x+10\ell],\text{U}[c^j_y-10\ell,c^j_y+10 \ell])$.
 This scenario is motivated by maze navigation problems, where collectible rewards are located at rooms (clusters) while in between rooms there are fewer rewards to collect.
 
Inspecting the results, we can see that NN-RDFS and R-NN perform the best, in particular in the worst case scenario. The reason for this is that NN picks the nearest reward, and most of its value comes from rewards collected at this cluster. On the other hand, the stochastic algorithms visit larger clusters first with higher probability and achieve higher value by doing so.

 \textbf{(4) Circle.} In this graph, there are $\sqrt{n}$ circles, all centered at the origin, 
 and the radii of the $i$th circle is $\rho_i = \frac{x}{\sqrt{n}}\cdot (1+\frac{1}{\sqrt[4]{n}})^i$
 On each circle we place 
 $\sqrt{n}$ rewards are placed at equal distances.
 
Here, NN-RDFS performs the best among all policies since it collects rewards closer to $s_1$ first. The greedy algorithms, on the other hand, are ``tempted'' to collect rewards that take them to the outer circles which results in lower values. 

\textbf{(5) Rural vs. Urban.} Here, the rewards are sampled from a mixture of two normal distributions. Half of the rewards are located in a ``city'', i.e., their position is a Gaussian random variable with a small standard deviation s.t. $r_i\sim(\text{N}(x,\ell),\text{N}(0,\ell))$. The other half is located in a ``village'', i.e., their position is a Gaussian random variable s.t. $r_i\sim(\text{N}(-x,10x),\text{N}(0,10x)).$ 

In this graph, we can see that in the worst case scenario, the stochastic policies perform much better than NN. This happens because NN is mistakenly choosing rewards that take it to remote places in the rural area, while the stochastic algorithms remain near the city with high probability and collect its rewards.


\subsection{Visualization}

\begin{figure*}
\centering
\includegraphics[width=\textwidth]{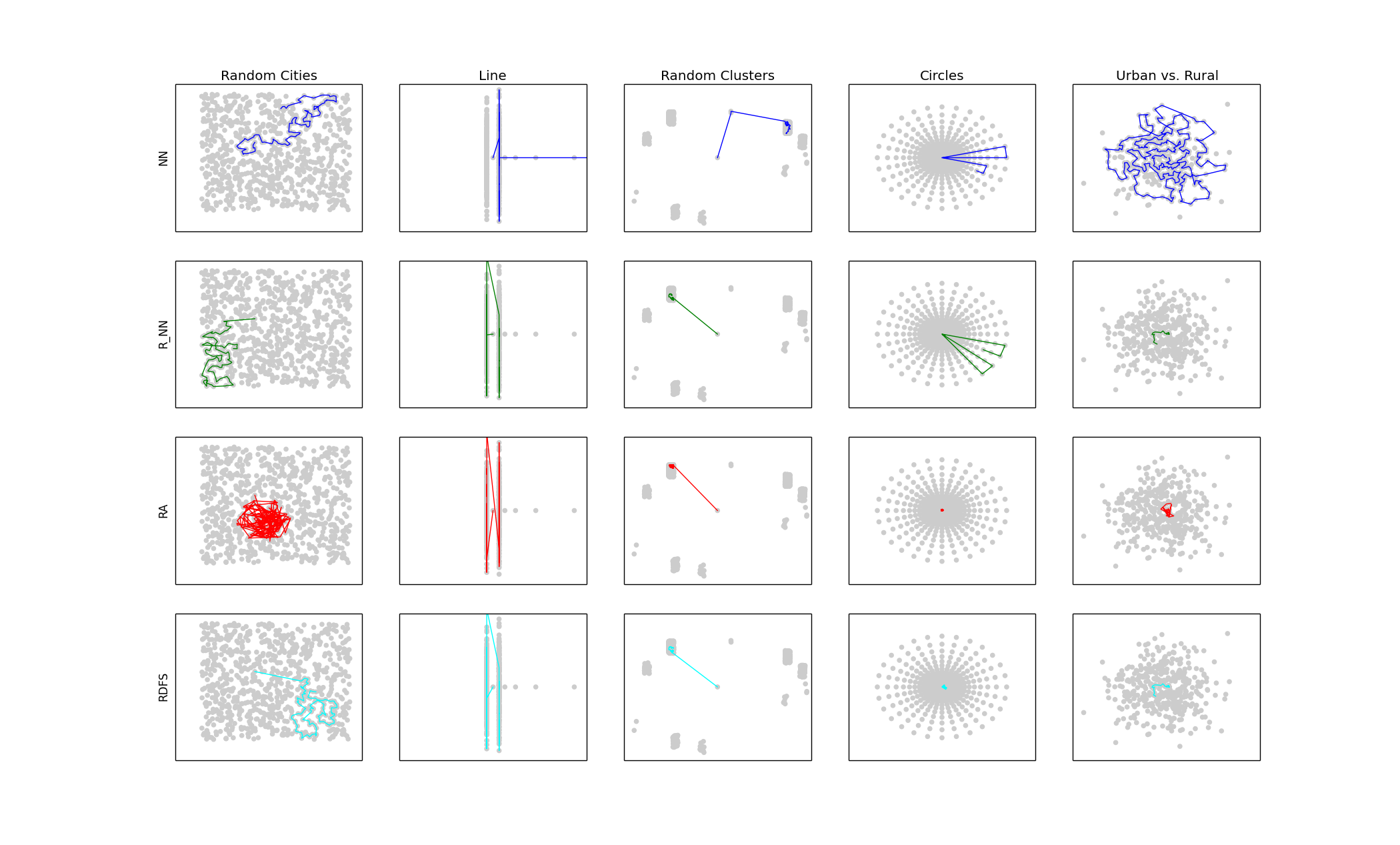}
\caption{Visualization of the best tours, taken by deterministic and stochastic local policies over different RD-TSPs.}
\label{fig:best}
\end{figure*}

\begin{figure*}
\centering
\includegraphics[width=\textwidth]{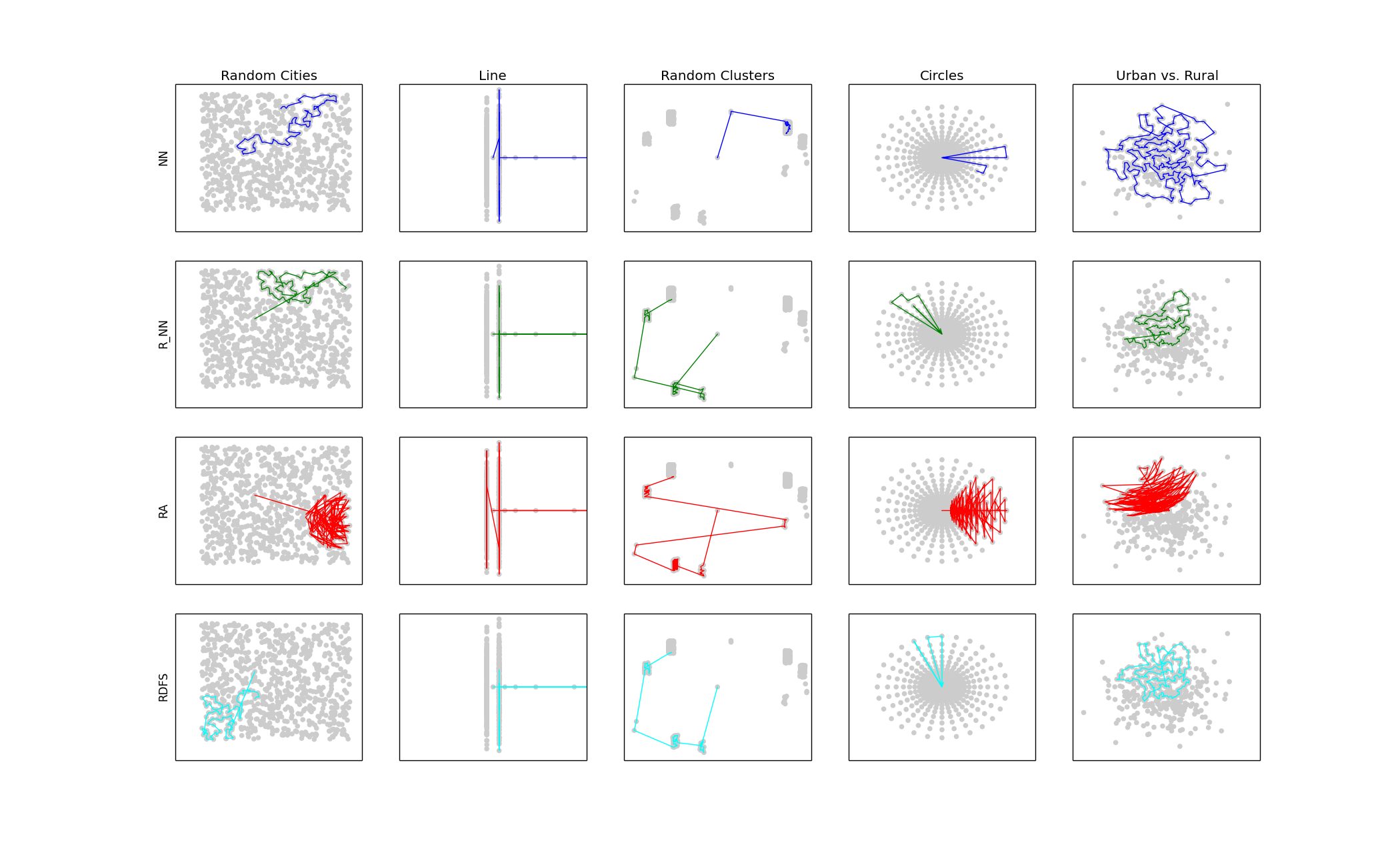}
\caption{Visualization of the worst tours, taken by deterministic and stochastic local policies over different RD-TSPs.}
\label{fig:worst}
\end{figure*}

We now present a visualization of the tours taken by the different algorithms. Note that in order to distinguish between the algorithms qualitatively, we compare the NN algorithm with the stochastic algorithms RDFS and RA (and not of NN-RDFS and NN-RA, i.e., without balancing them with NN). All the graphs we present have $n=800$ rewards, displayed on a 2D grid using gray dots. For each graph type, we present a single graph sampled from the appropriate distribution.  On top of it, we display the tours taken by the different algorithms, such that each row corresponds to a single algorithm. For the stochastic algorithms, we present the best (Figure \ref{fig:best}) and the worst tours (Figure \ref{fig:worst}), among 20 different runs (for NN we display the same tour since it is deterministic). Finally, for better interpretability, we only display the first $n/k$ rewards of each tour, in which the policy collects most of its value, with $k=8$ ( $n/k=100$ ) unless mentioned otherwise.

\textbf{Discussion.}\\
\textbf{Random cities:} Inspecting the worst tours, we can see that the stochastic tours are longer than the NN tour due to the distance to the first reward (that is drawn at random). In addition, we observe a ``zig-zag'' behavior in the tour of NN-RA while collecting rewards at equal distanced from $s_1$ (but which are not close to each other), which causes it to perform the worst in this scenario. The best tours exhibit similar behavior, but in this case, $s_1$ is located closer to $s_0$.  

\textbf{Line:} Recall that in this graph, the rewards are located in three different groups; each containing $n/3$ of the rewards. Ordered from left to right, group 1 is a cluster of rewards located left to the origin at distance $\theta / 3.$ Group 2 is also a cluster, but located to the right,  in a slightly shorter distance from the origin than group 1. Group 3 is also located to the right, but the rewards are placed in increasing distances, such that the $i$-th reward is located at $(\theta/3)2^i$. 

For visualization purposes, we added a small variance in the locations of the rewards at groups 1 and 2 and rescaled the axes. The two vertical lines of rewards represent these two groups, while we cropped the graph such that only the first few rewards in group 3 are observed. Finally, we chose $k=2$, such the first half of the tour is displayed, and we can see the first two groups visited in each tour. 

Examining the best tours (Figure \ref{fig:best}), we can see that NN first visits group 2, but is then tempted going right, to group 3, which harms its performance. On the other hand, the stochastic algorithms are staying closer to the origin and collect the rewards in groups 1 and 2 first. In the case of the worst case tours, the algorithms perform relatively the same.

\textbf{Random Clusters:} Here, we can see that NN  first visits the cluster nearest to the origin. The nearest cluster is not necessarily the largest one, and in practice, NN collects all the rewards in this cluster and traverses between the remaining clusters which result in lower performance. On the other hand, since the stochastic algorithms are selecting the first reward at random, with high probability the reach larger clusters and achieve higher performance. 

\textbf{Circles:} In this scenario, the distance between adjacent rewards on the same circle is longer than the distance between adjacent rewards on two consecutive circles.  Examining the tours, we can see that indeed NN and R-NN are taking tours that lead them to the outer circles. On the other hand, RDFS and RA are staying closer to the origin. Such local behavior is beneficial for RDFS, which achieves the best performance in this scenario. However, while RA performs well in the best case, its performance is much worse than the other algorithms in the worst case. Hence, its average performance is the worst in this scenario. 

\textbf{Rural vs. Urban:} In this graph, we have a large rural area, where the city is located near the origin which is hard to visualize. To improve the visualization here, we chose $k=2$, such the first half of the tour is displayed. Since half of the rewards belong to the city, choosing $k=2$ ensures that any tour that is reaching the city only the first segment of the tour (until the tour reaches the city) will be displayed. 

Looking at the best tours, we can see that NN is taking the longest tour before it reaches the city, while the stochastic algorithms reach it earlier. By doing so, the stochastic algorithms can collect many rewards by traversing short distances and therefore perform much better in this scenario.

\section{Related work}
\label{sec:related}
 {\bf Pre-defined rules for option selection} are used in several studies. \citeauthor{karlsson1997learning} (\citeyear{karlsson1997learning}) suggested a policy that chooses greedily with respect to the sum of the local Q-values $\mbox{a}^*={\mbox{argmax}_a} \sum _i Q_i(s,a)$. \citeauthor{humphrys1996action} (\citeyear{humphrys1996action}) suggested to choose the option with the highest local Q-value $\mbox{a}^*={\mbox{argmax}_{a,i}} Q_i(s,a)$ (NN).
Such greedy combination of local policies that were optimized separately may not necessarily perform well. \citet{barreto2016successor} considered a transfer framework similar to ours (Definition \ref{def:1}), but did not focus on collectible reward decomposition (Definition \ref{def:2}). Instead, they proposed a framework where the rewards are linear in some reward features $R_i(s,a) = w_i ^T \phi (s,a)$.\footnote{Notice that Definition \ref{def:2} is a special case of the framework considered by \citeauthor{barreto2016successor}} Similar to \citep{humphrys1996action}, they suggested using NN as the pre-defined rule for option selection (but referred to it as GPI). In addition, the authors provided performance guarantees for using GPI in the form of additive (based on regret) error bounds but did not provide impossibility results. In contrast, we prove multiplicative performance guarantees for NN, as well as for three stochastic policies.
We also proved, for the first time, impossibility results for such local option selection methods.

A different approach to tackle these challenges is \textbf{Multi-task learning,} in which we optimize the options in parallel with the policy over options \citep{russell2003q,sprague2003multiple,van2017hybrid}. 
One method that achieves that goal is the local SARSA algorithm \cite{russell2003q,sprague2003multiple}. Similar to \cite{karlsson1997learning}, a Q function is learned locally for each option (concerning a local reward). However, here the local Q functions are learnt on-policy (using SARSA) with respect to the policy over options $\pi(s)={\mbox{argmax}_a} \sum _i Q_i(s,a),$ instead of being learned off-policy with Q learning. \citet{russell2003q} showed that if the policy over options is being updated in parallel with the local SARSA updates, then the local SARSA algorithm is promised to converge to the optimal value function.

\section{Conclusions}
In this work, we provided theoretical guarantees for reward decomposition in deterministic MDPs, which allows one to learn many policies in parallel and combine them into a composite solution efficiently and safely. In particular, we focused on approximate solutions that are local, and therefore, easy to implement and do not require many computational resources. Local deterministic policies, like Nearest Neighbor, are being used in practice for hierarchical reinforcement learning. Our study provides an important theoretical guarantee on the reward collected by three such policies, as well as impossibility results for any local policy. These policies outperform NN in the worst case scenario; we evaluated them in the average, and worst case scenarios, suggesting when is it better to use each one.


\clearpage
\medskip
\small
\bibliography{paper_bib}
\bibliographystyle{icml2018}

\clearpage
\section{NN performance}
\begin{theorem*}[NN Performance]
For any discounted reward traveling salesman graph with n nodes, and any discount factor $\gamma$: $$ \frac{\text{NN}}{\text{OPT}} \ge  \frac{1}{n} . $$
\end{theorem*}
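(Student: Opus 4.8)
The plan is to exploit the single observation hinted at in the statement: the first reward collected by NN is at least as valuable as \emph{any} single reward collected by OPT. Let $d_{\min}$ denote the distance from the start $s_0$ to its nearest reward. This is precisely the reward NN picks first, since NN always selects the option of maximal value $\gamma^{d_{s_0,i}}$, which is the same as the geometrically closest reward. Hence NN collects value at least $\gamma^{d_{\min}}$ just from this first reward, and since all remaining discounted rewards are non-negative, I get $\text{NN} \ge \gamma^{d_{\min}}$.

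For the matching upper bound on OPT, I would write $\text{OPT}$ as a sum over the (at most $n$) rewards it collects, where the $j$-th reward contributes $\gamma^{D_j}$ and $D_j$ is the cumulative distance along OPT's tour to reach it. Because $d_{\min}$ is the smallest distance from $s_0$ to any reward, the first reward OPT visits already satisfies $D_1 \ge d_{\min}$, and since edge lengths are non-negative the cumulative distances only grow, so $D_j \ge d_{\min}$ for every $j$. With $\gamma \in (0,1)$ this gives $\gamma^{D_j} \le \gamma^{d_{\min}}$ for each term, and summing at most $n$ such terms yields $\text{OPT} \le n\,\gamma^{d_{\min}}$.

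Dividing the two bounds, the common factor $\gamma^{d_{\min}}$ cancels and gives $\frac{\text{NN}}{\text{OPT}} \ge \frac{\gamma^{d_{\min}}}{n\,\gamma^{d_{\min}}} = \frac{1}{n}$, as required, and uniformly in $\gamma$. I do not expect a serious obstacle here: the only point requiring care is that the exponent used to lower-bound NN must coincide with the one used to upper-bound every term of OPT, namely $d_{\min}$, the minimal start-to-reward distance, so that the discount factor cancels exactly. Everything else reduces to the elementary estimate that a sum of $n$ discounted rewards is at most $n$ times the largest possible single discounted reward, which in turn is what NN already secures on its first step.
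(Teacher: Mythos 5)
Your proposal is correct and follows essentially the same argument as the paper: NN secures $\gamma^{d_{\min}}$ on its first pick (the nearest reward), while every one of OPT's at most $n$ terms is discounted by a cumulative distance of at least $d_{\min}$, giving $\text{OPT} \le n\,\gamma^{d_{\min}}$ and hence the ratio $1/n$. The paper factors $\gamma^{d_{0,i^*}}$ out of OPT's sum rather than bounding each term separately, but this is the same estimate written slightly differently.
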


\begin{proof}
Denote by $i^*$ the nearest reward to the origin $s_0$, and by $d_{0,i^*}$ the distance from the origin to $i^*$.
The distance from $s_0$ to the first reward collected by OPT is at least  $d_{0,i^*}$.
 Thus, if $o_0=s_0,o_1,\ldots,o_{n-1}$ are the rewards ordered in the order by which OPT collects them we get that
\begin{align*}
\text{OPT} &= \sum_{j=0}^{n-1} \gamma ^ {\sum _{t=0}^{j} d_{o_t,o_{t+1}}} \\
&\leq \gamma ^{d_{0,i^*}} \left(1+ \sum_{j=1}^{n-1} \gamma ^ {\sum _{t=1}^{j} d_{o_t,o_{t+1}}} \right) \leq n \gamma ^ {d_{0,i^*}} 
\end{align*}
On the other hand, the NN heuristic chooses $i^*$ in the first round, thus, its cumulative reward is at least $\gamma ^{d_{0,i^*}}$ and we get that
$$ \frac{\text{NN}}{\text{OPT}} \ge \frac{\gamma ^{d_{0,i^*}}}{n\gamma^{d_{0,i^*}} } = \frac{1}{n}. $$
\end{proof}

\section{Impossibility results}
\subsection{Deterministic Local Policies}
\begin{theorem*}[Impossibility Results for Deterministic Local Policies] 
For any deterministic local policy D-Local, there exists a  graph with $n$ nodes and a discount factor $\gamma = 1-\frac{1}{n}$ such that $$ \frac{\text{D-Local}}{\text{OPT}} \leq \frac{24}{n}. $$
\end{theorem*}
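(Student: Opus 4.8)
\textit{Proof proposal.} The plan is to exhibit a single adversarial family of instances that simultaneously defeats every deterministic local policy, and then exploit determinism and locality to force bad behaviour on at least one member of the family. Concretely, I would take a star with center $s_0$ and $n$ leaves, each joined to the center by an edge of length $d := \log_{1/\gamma}(2)$ (so that $\gamma^d = \tfrac12$, and since $\gamma = 1-\tfrac1n$ we have $d = \Theta(n)$), with a reward at every leaf. For each subset $T$ of the leaves with $|T| = n/2$, let $G_T$ be obtained by adding, for every pair of leaves in $T$, a connecting edge of length $1$; thus the $T$-leaves form a cheap clique, while any non-$T$ leaf is reachable only through the center at cost $2d$. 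Let $G_\emptyset$ denote the base graph with no short edges.

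The key step is an indistinguishability (coupling) argument. I claim that a deterministic local policy makes exactly the same decisions on $G_T$ as on $G_\emptyset$ at every step before it first occupies a vertex of $T$; consequently the first vertex of $T$ it reaches, and the step at which it does so, are identical in both graphs. The reason is that the policy's input at a vertex $x$ is only $(x, h, \{V_i(x)\}_{i})$, and as long as $x \notin T$ the value of reaching any leaf $j$ is $\gamma^d$ from the center and $\gamma^{2d}$ from another leaf, the same in $G_T$ and $G_\emptyset$; the short clique edges are invisible to the value vector until the current vertex itself lies in $T$. By induction on the steps, the current vertex, the history, and the value vector all agree across the two graphs until the first $T$-arrival, hence so do the choices. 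Let $v_1,\dots,v_n$ be the order in which the policy collects the leaves on $G_\emptyset$; I then choose the adversarial instance to be $G_T$ with $T = \{v_{n/2+1},\dots,v_n\}$, the last $n/2$ leaves visited. By the coupling, on this $G_T$ the policy still collects $v_1,\dots,v_{n/2}$ first, all of them non-clique leaves, and only reaches $T$ at step $n/2+1$.

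It remains to compare values on this instance. D-Local collects the $n/2$ non-clique rewards at cumulative distances $d,3d,5d,\dots$, contributing $\sum_{k\ge1}\gamma^{(2k-1)d}\le \gamma^d/(1-\gamma^{2d}) = \tfrac23$, and it reaches the clique only at cumulative distance $\ge (n+1)d$, so the whole clique contributes at most $n\gamma^{(n+1)d} = n\,2^{-(n+1)}$, which is negligible; thus $\text{D-Local} \le 1$ for large $n$. OPT instead heads straight into the clique, collecting all $n/2$ clique rewards at cumulative distances $d, d+1,\dots,d+\tfrac{n}{2}-1$, giving $\text{OPT}\ge \gamma^d\frac{1-\gamma^{n/2}}{1-\gamma} = \tfrac12\bigl(1-e^{-1/2}+o(1)\bigr)\,n = \Omega(n)$. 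Dividing yields $\text{D-Local}/\text{OPT} = O(1/n)$, and tracking the constants (which are comfortably absorbed by the slack in the bound) gives the stated $24/n$.

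The main obstacle I anticipate is making the indistinguishability step fully rigorous, and in particular handling adaptivity: the policy may change its behaviour once it ``discovers'' a short edge, so the argument must pin down exactly when the value vector first differs between $G_T$ and $G_\emptyset$ and confirm that this is precisely the moment of first entering $T$. Everything else, namely the choice of $d$, the two geometric-series estimates, and the limit $\gamma^{n/2}\to e^{-1/2}$, is routine algebra, and the freedom in the constant $24$ means the individual bounds need not be tight.
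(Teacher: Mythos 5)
Your proposal is correct and follows essentially the same route as the paper: the identical star-plus-hidden-clique family with $\gamma^d=\tfrac12$, the observation that a deterministic local policy cannot distinguish the graphs until it first occupies a clique vertex (so the adversary places the clique on the last $n/2$ leaves visited), and the same geometric-series comparison of D-Local against an OPT that enters the clique immediately. Your explicit coupling with $G_\emptyset$ and the induction on steps simply makes rigorous the indistinguishability step that the paper asserts in one sentence, and your constant ($\approx 5/n$ versus the paper's $24/n$) is comfortably within the stated bound.
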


\begin{proof}
Consider a family of graphs, ${\cal G}$, each of which consists of a star 
with a central vertex and $n$ leaves.
The starting vertex is the central vertex, and there is a reward at each leaf. The length of each edge is $d$, where $d$ is
chosen s.t.
 $\gamma^d = \frac{1}{2}$.

Each graph of the family ${\cal G}$ 
corresponds to a different subset of $n/2$ of the leaves which we connect (pairwise) by edges of length $1$. (The
 other $\frac{n}{2}$ leaves are only connected to the central vertex.) 
While at the central vertex, local policy cannot distinguish among the $n$ rewards (they all at the same distance from the origin), and therefore its choice is the same for all graphs in ${\cal G}$. (The following choice is also the same and so on, as long as it does not hit one of the $n/2$ special rewards.)

It follows that, for any given policy, there exists a graph 
in ${\cal G}$ such that the adjacent $n/2$ rewards are visited last. Finally, since $\gamma = 1-\frac{1}{n}$ we have that 
$\frac{n}{4} \le \sum_{i=0} ^{\frac{n}{2}-1} \gamma ^i = \frac{1-\gamma^{n/2}}{1-\gamma} \le \frac{n}{2}$ and thus
\begin{align*}
\frac{\text{D-Local}}{\text{OPT}} &= \frac{\sum_{i=1} ^{\frac{n}{2}} \gamma ^ {(2i-1)d} +  \gamma ^ {nd+1} \sum_{i=0} ^{\frac{n}{2}-1} \gamma ^i}{\gamma ^ {d} \sum_{i=0} ^{\frac{n}{2}-1} \gamma ^i + \gamma ^{2d+\frac{n}{2}-1} \sum_{i=1} ^{\frac{n}{2}} \gamma ^ {(2i-1)d}} \\
&\le \frac{\sum_{i=1} ^{\frac{n}{2}} \gamma ^ {(2i-1)d} +  0.5n \gamma ^ {nd+1} }{\gamma ^ {d} \sum_{i=0} ^{\frac{n}{2}-1} \gamma ^i } \\
&= \frac{2 \sum_{i=1} ^{\frac{n}{2}} 0.25 ^ i +  \frac{0.5 ^ {n} n}{4}  }{0.5 \sum_{i=0} ^{\frac{n}{2}-1} \gamma ^i } \le 
\frac{6}{\sum_{i=0} ^{\frac{n}{2}-1} \gamma ^i } \le \frac{24}{n}.
\end{align*}

\end{proof}
\subsection{Stochastic Local Policies}

\begin{theorem*}[Impossibility Results for Stochastic Local Policies]
For each stochastic local policy S-Local, there exists a  graph with $n$ nodes and a discount factor $\gamma = 1-\frac{1}{\sqrt{n}}$ such that $$ \frac{\text{S-Local}}{\text{OPT}} \leq \frac{8}{\sqrt{n}}. $$
\end{theorem*}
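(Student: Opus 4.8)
The plan is to mirror the deterministic construction but to replace the ``same choice on all graphs'' argument with an averaging argument over a random hidden clique. I would take the family $\cal G$ of stars with a central starting vertex and $n$ leaf-rewards, each leaf joined to the center by an edge of length $d$ with $\gamma^{d}=\tfrac12$; since $\gamma=1-1/\sqrt n$ this forces $d=\Theta(\sqrt n)$. Each member of $\cal G$ is indexed by a subset $C$ of $m=\sqrt n$ leaves that are additionally joined pairwise by edges of length $1$. By symmetry OPT is identical on every graph of the family: it steps to one clique leaf and sweeps the clique, so that
$$\text{OPT}\ \ge\ \gamma^{d}\sum_{i=0}^{\sqrt n-1}\gamma^{i}\ =\ \tfrac12\cdot\frac{1-\gamma^{\sqrt n}}{1-\gamma}\ \ge\ \tfrac12\,(1-e^{-1})\sqrt n\ =\ \Omega(\sqrt n).$$

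The crucial observation, which replaces ``the policy behaves identically on all graphs,'' is a locality/indistinguishability claim adapted to the stochastic case: as long as the policy has not yet stepped onto a clique leaf, every unvisited leaf presents the same local value $\gamma^{d}$ and the same clique-free history, so the policy cannot tell clique leaves from ordinary ones. Hence, \emph{for a fixed realization of its coins}, the order in which the policy probes leaves through the center is a fixed permutation that does not depend on $C$. I would let the adversary draw $C$ uniformly at random and set $K$ to be the number of ordinary leaves probed before the policy first reaches a clique leaf. Clique-independence of the probe order then gives $\Pr[K\ge k]\le\big(\tfrac{n-m}{n}\big)^{k}=\gamma^{k}$, and conditioning on the next probe yields $\Pr[K=k]\le\gamma^{k}\,\tfrac{m}{n-k}$.

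I would then bound the expected reward of S-Local by splitting the tour at the first clique contact. Before that contact the probed leaves sit at cumulative distances $d,3d,5d,\dots$, so the pre-clique reward is at most $\gamma^{d}\sum_{i\ge0}\gamma^{2id}=\tfrac12\cdot\tfrac{1}{1-1/4}=\tfrac23$ on every run. After reaching a clique leaf at cumulative distance $(2K+1)d$, and since every edge has length at least $1$, the remaining reward is at most $\gamma^{(2K+1)d}\cdot\tfrac{1}{1-\gamma}=\tfrac12(\tfrac14)^{K}\sqrt n$. Taking expectations and using the tail bound gives $\mathbb{E}[(\tfrac14)^{K}]\le\tfrac{2}{\sqrt n}\sum_{k\ge0}(\tfrac14)^{k}+o(1/\sqrt n)=O(1/\sqrt n)$, so
$$\mathbb{E}_{C,\text{coins}}[\text{S-Local}]\ \le\ \tfrac23+\frac{\sqrt n}{2}\,\mathbb{E}\big[(\tfrac14)^{K}\big]\ =\ O(1).$$
Averaging over $C$ then produces a single graph on which the policy's expected reward is $O(1)$ while $\text{OPT}=\Omega(\sqrt n)$, yielding the ratio $8/\sqrt n$ after tracking the constants.

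The step I expect to be the main obstacle is making the indistinguishability claim airtight for \emph{adaptive stochastic} policies, i.e. arguing rigorously that the distribution of the probe order up to the first clique contact is genuinely independent of $C$ even though the policy conditions on its (clique-free) history and its own coins. Everything downstream is the interplay between the geometric discount decay $(\tfrac14)^{K}$ and the ``random hitting time'' law of $K$, together with routine constant-chasing to land exactly on the stated $8/\sqrt n$.
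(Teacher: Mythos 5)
Your proposal is correct and follows essentially the same route as the paper: the identical star-plus-hidden-clique family with $\gamma^d=\tfrac12$ and $\sqrt n$ clique leaves, the same $\Omega(\sqrt n)$ lower bound on OPT, and the same derandomization — your fix-the-coins averaging over a uniformly random clique is precisely the paper's invocation of Yao's principle, and your hitting-time computation $\mathbb{E}\left[(1/4)^K\right]=O(1/\sqrt n)$ is the same calculation as the paper's sum $p_1+(1-p_1)\gamma^{2d}p_2+(1-p_1)(1-p_2)\gamma^{4d}p_3+\cdots$ with $p_i\le 2/\sqrt n$ for $i\le n/2$. Your only departures are minor bookkeeping refinements — the explicit $2/3$ bound on pre-clique rewards (which the paper simply neglects) and the $\gamma^k$ tail bound on $\Pr[K\ge k]$ — not a different argument.
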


\begin{proof}
We consider a family of graphs, ${\cal G}$, each of which consists of a star
with a central vertex and $n$ leaves.
The starting vertex is the central vertex, and there is a reward at each leaf. The length of each edge is $d$, where $d$ is
chosen such that 
 $\gamma^d = \frac{1}{2}$.
 Each graph in ${\cal G}$ corresponds to a subset of 
 $\sqrt{n}$ leaves which we pairwise connect to form a clique.

Since $\gamma = 1-\frac{1}{\sqrt{n}},$ we have that $\sum_{i=0} ^{\sqrt{n}-1} \gamma ^i \ge \frac{\sqrt{n}}{2},$ and therefore
\begin{align*}
\text{OPT}&=\gamma^d \sum _{i=0}^{\sqrt{n}-1} \gamma ^i + \gamma ^{2d+\sqrt{n}-1} \sum_{i=1} ^{n-\sqrt{n}} \gamma ^ {(2i-1)d} \\
& \geq 0.5 \sum _{i=0}^{\sqrt{n}-1}\gamma ^ i \ge 0.25 \sqrt{n}.
\end{align*}

On the other hand, local policy at the central vertex cannot distinguish among the rewards and therefore for every graph in ${\cal G}$ it picks the first reward from the same distribution. The policy continues to choose rewards from the same distribution until it hits the first reward from the $\sqrt{n}$-size clique. 

To argue formally that every S-Local policy has small expected reward on a graph from ${\cal G}$, we use Yao's principle \cite{yao1977probabilistic} and consider the expected reward of a D-Local policy on the uniform distribution over ${\cal G}$. 

Let $p_1 = \sqrt{n}/n$ be the probability that D-Local picks its first vertex from the $\sqrt{n}$-size clique. 
Assuming that the first vertex is not in the clique, let $p_2 = \sqrt{n}/(n-1)$ be the probability that the second vertex is from the clique, and let $p_3$, $p_4$, $\ldots$ be defined similarly. When D-local picks a vertex in the clique then its reward (without the cumulative discount) is $O(\sqrt{n})$. However, each time D-Local misses the clique then it collects a single reward but suffers a discount of
$\gamma^{2d} = 1/4$. 
Neglecting the rewards collected until it hits the clique, the total value of D-Local is 
$$
O\left( \left( p_1 + (1-p_1)\gamma^{2d}p_2 + (1-p_1)(1-p_2)\gamma^{4d}p_3 \ldots\right)  \sqrt{n}\right)
$$
Since $p_i \le 2/\sqrt{n}$ for $1\le i\le n/2$ this value is $O(1)$
\end{proof}

\section{NN-RA}
\begin{theorem*}[NN-RA Performance]
For any RD-TSP instance with $n$ nodes and for any discount factor $\gamma$
$$
\frac{\text{NN-RA}}{\text{OPT}} \ge 
\begin{cases} 
\Omega \left( \frac{n^{-\frac{2}{3}}}{ \log(n)} \right), &\text{if} \enspace \text{OPT} = \Omega (n). \\
\Omega \left( \frac{n^{-\frac{3}{4}}}{\log(n)} \right), &\text{otherwise}. \\
\end{cases} 
$$
\end{theorem*}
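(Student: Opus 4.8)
The plan is to mirror the three-step structure of the NN-RDFS proof (Theorem~\ref{theo:nn-dfs}), replacing the role played there by the bounded-threshold DFS with the random ascent, and to quantify the penalty RA pays for the ``zig-zag'' it suffers when two nodes lie at a similar distance from $s_1$ but far from each other. The one genuinely new ingredient is a zig-zag lemma that plays the role the component bound $\min\{|C_j|,x/\theta\}$ played for RDFS. Fix, \emph{for the analysis only}, a threshold $\theta$ and prune every edge longer than $\theta$; by Lemma~\ref{lemma:theta} this leaves at most $x/\theta$ connected components $\{C_j\}$ inside $S_\text{OPT}$, each internally joined by edges of length $\le\theta$. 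I claim that if RA starts at $s_1\in C_j$ it collects $\Omega(\min\{|C_j|,\sqrt{x/\theta}\})$ rewards within total travel distance $O(x)$, hence at discount $\gamma^{O(x)}=\Omega(1)$. A BFS inside $C_j$ shows that at least $\min\{|C_j|,m+1\}$ nodes lie within distance $m\theta$ of $s_1$, so the $i$-th closest node overall satisfies $\rho_i\le i\theta$ for $i\le|C_j|$; since RA visits nodes in increasing order of $\rho_i$, the triangle inequality gives a cumulative distance $D_k\le 2\sum_{i\le k}\rho_i\le 2\theta k^2$ to reach the $k$-th node, which is $O(x)$ exactly when $k=\sqrt{x/\theta}$. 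The crucial contrast with RDFS is that the accessible budget is $\sqrt{x/\theta}$ rather than $x/\theta$: RDFS walks the short edges and pays $\theta$ per reward, whereas RA, blind to those edges, pays up to $2\rho_i$ and therefore a quadratically larger total.

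Given the lemma, Step 1 repeats the RDFS averaging verbatim. A uniformly random $s_1$ lands in $C_j$ with probability $|C_j|/n$, so RA collects at least $\tfrac14\gamma^{d_\text{max}}\sum_j\tfrac{|C_j|}{n}\min\{|C_j|,B\}$ with $B=\sqrt{x/\theta}$, and the same two-case split used for RDFS (more than half the reward mass in components of size $\ge B$, versus the complementary case where I apply Jensen using that there are at most $x/\theta=B^2$ components) lower bounds this by $\Omega(\gamma^{d_\text{max}}\min\{\alpha B,\alpha^2 n/B^2\})$. Here lies the key economy over RDFS: because $\theta$ appears only in the analysis and never in Algorithm~\ref{alg:nn-bfs}, I am free to set it \emph{after the fact} so as to balance the two terms, namely $B\approx(\alpha n)^{1/3}$, and I pay nothing for ``guessing'' it. This yields collected value $\Omega(\gamma^{d_\text{max}}\alpha^{4/3}n^{1/3})$; dividing by $\text{OPT}\le\gamma^{d_\text{min}}\alpha n$ and using $d_\text{max}-d_\text{min}\le x$, and writing $m=\alpha n$ for the number of rewards OPT collects in a length-$x$ segment, this reads $\tfrac{m}{n}\cdot\Omega(m^{1/3})$, i.e.\ conditioned on $s_1$ landing among those $m$ rewards RA gathers $\Omega(m^{1/3})$ of the segment's value.

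Step 2 then balances the two halves of NN-RA exactly as before. With OPT's value concentrated on $m$ rewards in one segment, NN alone secures a $1/m$ fraction by grabbing the closest reward, while RA contributes $\tfrac{m}{n}\cdot\Omega(m^{-2/3})=\Omega(m^{1/3}/n)$, so $\text{NN-RA}/\text{OPT}=\Omega(1/m+m^{1/3}/n)$, minimized at $m\approx n^{3/4}$ and equal to $\Omega(n^{-3/4})$. Step 3 is identical to the RDFS argument: OPT earns at most $1$ beyond its first $\log_2 n$ segments of length $x$, so some length-$x$ segment carries at least $\text{OPT}/\log_2 n$ of OPT's value, and applying Steps 1--2 to that segment loses the single $\log n$ factor, giving the stated $\Omega(n^{-3/4}/\log n)$. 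For $\text{OPT}=\Omega(n)$ I instead use that a discounted total of $\Omega(n)$ forces a constant fraction of both the rewards and of OPT's value into a single length-$x$ segment of index $O(1)$; applying Step 1 there with $\alpha=\Theta(1)$ and $m=\Omega(n)$ yields $\Omega(n^{-2/3})$, which in particular is $\Omega(n^{-2/3}/\log n)$. Note that only one logarithmic factor arises, from Step 3, since RA incurs no $\theta$-guessing loss.

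The main obstacle is the zig-zag lemma, and specifically pinning down the exponent in the budget: everything downstream is the RDFS algebra with $x/\theta$ replaced by $\sqrt{x/\theta}$, so the theorem stands or falls on showing that RA's cumulative cost to reach its $k$-th reward grows like $\theta k^2$ rather than $\theta k$. I would spend most of the effort verifying that the bound $D_k\le 2\sum_{i\le k}\rho_i$ together with the layer count $\rho_i\le i\theta$ is tight enough, and checking that interleaved rewards from other components, being even closer to $s_1$, only help; this quadratic blow-up is precisely what separates the $n^{-2/3}$ guarantee of RA from the $n^{-1/2}$ guarantee of RDFS.
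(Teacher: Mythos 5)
Your proposal is correct and takes essentially the same approach as the paper's proof: the same analysis-only pruning at threshold $\theta$ via Lemma~\ref{lemma:theta}, the same key quadratic-budget bound that RA collects $\Omega\left(\min\left\{|C_j|,\sqrt{x/\theta}\right\}\right)$ rewards within travel distance $O(x)$ (your $D_k\le 2\sum_{i\le k}\rho_i\le 2\theta k^2$ is exactly the paper's layering argument, which bounds the travel by $2\sum_{i=2}^{k}(i-1)\theta\le k^2\theta$), the same two-case Jensen averaging, the same Step~2 balance $\frac{1}{2n'}+\frac{(n')^{1/3}}{4n}$ minimized at $n'\approx n^{3/4}$, and the same single-logarithm Step~3 segment argument enabled by $\theta$ never entering Algorithm~\ref{alg:nn-bfs}. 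Your explicit handling of rewards interleaved from other components and your concrete derivation of the $\text{OPT}=\Omega(n)$ case are, if anything, slightly more careful than the paper's brief appeal to the NN-RDFS Step~3.
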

\begin{proof}
\textbf{Step 1.}
Assume that OPT collects   a set $S_\text{OPT}$ of $\alpha n$  rewards for some $0\le \alpha \le 1$, in a segment $p$ of length $x = \mbox{log}_\frac{1}{\gamma} (2)$ (i.e. $x$ is the distance from the first reward to the last reward -- it does not include the distance from the starting point to the first reward). Let $d_{\text{min}},d_{\text{max}}$ the shortest and longest distances from $s_0$ to a reward in $S_\text{OPT}$ respectively. By the triangle inequality, $d_{\text{max}} - d_{\text{min}} \le x.$ 
We further assume  that $OPT \le O(\gamma^{d_\text{min}}\alpha n)$ (i.e., That is the value that OPT collects from rewards which are not in $S_\text{OPT}$ is negligible). 

Let $\theta$ be a threshold that we will fix below, and denote by $\{C_j\}$ the CCs of $S_\text{OPT}$ that are created by deleting edges longer than $\theta$ among vertices of $S_\text{OPT}$. By Lemma  \ref{lemma:theta}, we 
 have at most $x/\theta$ CC.
 
 Assume that RA starts at a vertex of a component $C_j$, such that $|C_j|=k$.
 Since the diameter of $C_j$ is at most $(|C_j|-1)\theta$ then it collects its first $k$ vertices (including $s_1$) within a total distance 
 of $2\sum _{i=2}^{k} (i-1)\theta \le k^2 \theta $. 
 So if $k^2 \theta \le x$ then it collects at least $|C_j|$ rewards before traveling a total distance of $x$, and if 
 $k^2 \theta > x$ it collects at least $\lfloor \sqrt{x/\theta} \rfloor$ rewards. (We shall omit the floor function for brevity in the sequal.) 
 It follows that RA collects  $\Omega \left( \mbox{min} \{ |C_j|, \sqrt{\frac{x}{\theta}}\} \right)$ rewards.
Notice that the first random step leads RDFS to a vertex in CC $C_j$ with probability $\frac{|C_j|}{n}$. If more than half of rewards are in CCs s.t $|C_j| \ge \sqrt{\frac{x}{\theta}},$ then
\begin{align*}
\mbox{RA} \ge & \gamma^{d_\text{max}}\sum_{j=1}^{\frac{x}{\theta}} \frac{|C_j|}{n} \cdot \mbox{min}\left\{ |C_j|,\sqrt{\frac{x}{\theta}} \right\} \\
\ge & \gamma^{d_\text{max}}\sum_{j: |C_j| \ge \sqrt{\frac{x}{\theta}} } \frac{|C_j|}{n} \cdot \sqrt{\frac{x}{\theta}} \ge \gamma^{d_\text{max}}\frac{\alpha }{2 }  \sqrt{\frac{x}{\theta}}.
\end{align*}
If more than half of rewards in $S_\text{OPT}$ are in CCs such that $|C_j| \le \sqrt{ \frac{x}{\theta}},$ let $s$ be the number of such CCs and notice that $s \le \frac{x}{\theta}.$ We get that: 
\begin{align*}
\mbox{RA} & = \gamma^{d_\text{max}} \sum_{j=1}^{\frac{x}{\theta}} \frac{|C_j|}{n} \cdot \mbox{min}\left\{ |C_j|,\frac{x}{\theta} \right\}  \\ 
& \ge  \gamma^{d_\text{max}} \sum_{j: |C_j| \le \sqrt{ \frac{x}{\theta} } } \frac{|C_j|^2}{n} \ge \frac{s}{n} \gamma^{d_\text{max}} \left( \frac{1}{s} \sum_{j=1}^s  |C_j|^2 \right)  \\ 
& \underset{\mbox{Jensen}}{\ge}  \frac{s}{n} \gamma^{d_\text{max}}  \left( \frac{1}{s}\sum_{j=1}^s |C_j| \right) ^2 \ge  \gamma^{d_\text{max}} \frac{\theta \alpha^2 n}{4x}. 
\end{align*}

By setting $\theta = \frac{x}{n^{2/3}}$ we guarantee that the value of RA is at least $\gamma^{d_\text{max}}\alpha^2 n^{1/3} / 4$. Since $d_\text{max}-d_\text{min}\leq x$,
$$
\frac{\text{RA}}{\text{OPT}} \ge \frac{ \gamma^{d_\text{max}} \alpha^2 n^{1/3} / 4}{\gamma^{d_\text{min}} \alpha n} \ge \frac{\alpha \gamma ^{x}}{4n^{2/3}} = \frac{\alpha}{2n^{2/3}} ,
$$ where the last inequality follows from the triangle inequality.

 \textbf{Step 2.} 
Assume that OPT gets its value from $n'<n$ rewards that it collects in a segment of length $x$ (and from all other rewards OPT collects a negligible value). Recall that the NN-RA policy is either NN with probability $0.5$ or RA with probability $0.5$. By picking the single reward closest to the starting point, NN gets at least $1/n'$ of the value of OPT. Otherwise, with probability $n'/n$, RA starts with one of the $n'$ rewards picked by OPT and then, by the analysis of step 1, if it sets $\theta = \frac{x}{(n')^{2/3}}$, RA collects  $\frac{1}{2(n')^{2/3}} $  of the value collected by OPT (we use Step (1) with $\alpha =1$). It follows that $$
\frac{\text{NN-RA}}{\text{OPT}} \ge \frac{1}{2} \cdot \frac{1}{n'} + \frac{1}{2} \cdot \frac{n'}{n} \cdot \frac{1}{2(n')^{2/3}}  = \frac{1}{2n'} + \frac{(n')^{1/3}}{4n}.
$$

This lower bound is smallest when $n' \approx n^{\frac{3}{4}}$, in which case NN-RA collect $\Omega(n^{-3/4})$ of OPT.

\textbf{Step 3.}
By the same arguments from Step 3 in the analysis of NN-RDFS, it follows that $$
\frac{\text{NN-RA}}{\text{OPT}} \ge 
\begin{cases} 
\Omega \left( \frac{n^{-\frac{2}{3}}}{ \log(n)} \right), &\text{if} \enspace \text{OPT} = \Omega (n). \\
\Omega \left( \frac{n^{-\frac{3}{4}}}{\log(n)} \right), &\text{otherwise}. \\
\end{cases} 
$$
\end{proof}
\section{R-NN}
We now analyze the performance guarantees of the R-NN method. The analysis is conducted in two steps. In the first step, we assume that OPT achieved a value of $\Omega (n')$ by collecting $n'$ rewards and consider the case that $n' = \alpha n$. The second step considers the more general case and analyzes the performance of NN-Random for the worst value of $n'.$ We emphasize that unlike the previous two algorithms, we do not assume this time that OPT collects its rewards at a segment of length $x$\footnote{Therefore, we do not perform a third step like we did in the analysis of the previous methods.}. 

\begin{theorem*}[R-NN]
For a discounted reward traveling salesman graph with n nodes: $$
\frac{\text{R-NN}}{\text{OPT}} \ge \Omega \left( \frac{\log (n)}{n} \right)
$$

\end{theorem*}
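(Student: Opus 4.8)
The plan is to exploit the two-branch structure of R-NN: since the algorithm runs plain NN with probability $\tfrac12$ and ``random first pick, then NN'' with probability $\tfrac12$, we always have $\text{R-NN}\ge \tfrac12\,\text{NN}+\tfrac12\,(\text{random branch})$. Theorem~\ref{theo:1} already hands us $\text{NN}\ge \text{OPT}/n$ as a free baseline, so the entire content of the theorem is to show that pre-pending a jump to a uniformly random reward boosts the guarantee by a logarithmic factor, i.e.\ that the random branch alone contributes $\Omega(\text{OPT}\log(n)/n)$. Accordingly, I would condition on the random reward $s_1$ that the algorithm lands on (each with probability $1/n$) and lower bound what NN collects once started from $s_1$.

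For Step~1 I would assume, as the paper indicates, that OPT attains value $\Omega(n')$ from $n'$ rewards, first with $n'=\alpha n$. Because OPT$\,=\Omega(n')$ is extracted from $n'$ rewards each of value at most $1$, a constant fraction of them must individually have value $\Omega(1)$, hence lie within cumulative distance $O(x)$ of the origin (where $x=\log_{1/\gamma}2$) and therefore within $O(x)$ of one another. The random pick lands inside this high-value cluster with probability $\Omega(\alpha)$, and conditioned on $s_1$ being a cluster reward I would bound NN from $s_1$ using Theorem~\ref{theo:1}: $\text{NN from }s_1\ge \tfrac1m\cdot(\text{optimal residual value from }s_1)$, then lower bound the residual by re-basing a sub-path of OPT's own tour at $s_1$ and covering the whole cluster cheaply (the cluster is coverable by an OPT sub-path of length $O(x)$, so reaching it from $s_1$ costs only a constant discount factor $\gamma^{O(x)}=\Omega(1)$). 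This yields a bound for the random branch in terms of $\alpha$ and $n$.

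Step~2 removes the single-cluster assumption by partitioning OPT's rewards into $O(\log n)$ geometric value bands, band $m$ being the rewards with $\gamma^{D}\in(2^{-(m+1)},2^{-m}]$. Each band spans cumulative OPT-distance less than $x$, so by the same reasoning as Lemma~\ref{lemma:theta} it is coverable by a short OPT sub-path and behaves like the segment case of Step~1; the random start lands in band $m$ with probability proportional to its size. Aggregating the per-band contributions, the $O(\log n)$ bands together with the parameter-free averaging of the random pick are where the $\log n$ factor is meant to materialize, which is exactly why R-NN needs no separate third ``segment'' step (nor a $\theta$-guessing step) as NN-RDFS and NN-RA did. \emph{The main obstacle is precisely obtaining this logarithmic gain.} The naive estimate ``NN $\ge$ (residual)$/n$'' only reproduces $\text{OPT}/n$ — for instance when OPT is dominated by a few early high-value rewards the tails $T_j=\sum_{k>j}\gamma^{D_k}$ are too small, and summing $\tfrac1{n^2}\sum_j T_j$ stays at $O(\text{OPT}/n)$. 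The delicate point is therefore a strong enough lower bound on NN \emph{started inside a value band}, simultaneously controlling the discount $\gamma^{d(s_0,s_1)}$ paid to reach the random start and the within-band travel, so that the bands' contributions add up to $\Omega(\log(n)/n)$ rather than collapsing back to $\Omega(1/n)$.
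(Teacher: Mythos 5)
Your high-level skeleton --- the two-branch averaging $\text{R-NN}\ge\tfrac12\text{NN}+\tfrac12(\text{random branch})$, Theorem~\ref{theo:1} as the NN baseline, conditioning on the random pick $s_1$, and a final trade-off over $n'$ between the two branches --- matches the outer structure of the paper's proof (its Step~2 is exactly your averaging, with worst case $n'\log n'\approx n$). But the heart of the theorem, a per-start guarantee that NN launched from a random reward collects $\Omega(\log n)$ rewards before suffering more than a constant discount, is absent from your proposal, and you concede as much (``the main obstacle is precisely obtaining this logarithmic gain''). Your proposed substitutes cannot close the gap: applying Theorem~\ref{theo:1} to the residual problem from $s_1$ loses a factor equal to the number of remaining rewards, and the geometric value-band decomposition only redistributes mass --- landing in band $m$ costs a discount $\gamma^{d(s_0,s_1)}\approx 2^{-m}$ that exactly cancels the factor $2^{m}$ gained by re-basing OPT's tail at $s_1$, so the aggregate $\sum_m \frac{n_m}{n}\cdot\frac{1}{n}\cdot 2^{m}T_m$ collapses back to $O(\text{OPT}/n)$, precisely the failure mode you describe. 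The number of bands, $O(\log n)$, never enters multiplicatively, so nothing in your outline produces the logarithm.

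The paper's mechanism is different in kind and exploits the greedy rule itself rather than averaging. It prunes edges longer than $\theta=x/\sqrt{n}$ (purely as an analysis device; the algorithm remains parameter-free) and calls a connected component \emph{large} if it holds more than $\log(n)$ rewards; by Lemma~\ref{lemma:theta} there are at most $\sqrt{n}$ components, so small components contain at most $\sqrt{n}\log(n)$ rewards and a uniform $s_1$ lands in a large component with probability at least $\tfrac12$. The key step (Lemmas~\ref{lemma:budget1} and~\ref{lemma:budget2}, Corollary~\ref{lemma:budget3}) is a doubling recurrence: while fewer than $\log(n)$ rewards of a large component have been collected, an uncollected reward always lies within distance $d_i+(k+1)\theta$ of NN's current position, where $d_i$ is the distance traveled so far, hence $d_{i+1}\le 2d_i+O(\theta\log n)$; consequently NN can traverse total distance $x=\sqrt{n}\,\theta$ only after collecting $r$ rewards with $2^{r+2}\log^2(n)\ge\sqrt{n}$, i.e.\ $r=\Omega(\log n)$ rewards inside a window whose total discount is only $\gamma^{x}=\tfrac12$ (Lemma~\ref{lemma:sumri2ri}). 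This is what converts the random start into value $\Omega(\log n)\cdot\gamma^{d_\text{max}}$, and together with the $\tilde{x}=(\alpha+1)x$ argument bounding $d_\text{max}-d_\text{min}$ and the averaging over $n'$ you correctly anticipated, yields $\Omega(\text{OPT}\log(n)/n)$. Without a lemma of this type your argument proves only the $\Omega(\text{OPT}/n)$ bound that NN alone already gives.
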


\begin{proof}
\textbf{Step 1.}
Assume OPT collects $\Omega \left(n\right)$ rewards. Define $x = \mbox{log}_\frac{1}{\gamma} (2)$ and $\theta=\frac{x}{\sqrt{n}}$ (here we can replace the $\sqrt{n}$ by any fractional power of $n$, this will not affect the asymptotics of the result) and denote by $\{C_j\}$ the CCs that are obtained by pruning edges longer than $\theta$. We define a CC to be large if it contains more than $\log (n)$ rewards. Observe that since there are at most $\sqrt{n} \enspace$ CCs (Lemma \ref{lemma:theta}), at least one large CC exists. 

\begin{lemma}
\label{lemma:budget1}
Assume that $s_1$ is in a large component $C$. Let $p$ be the path covered by NN starting from $s_1$ until it reaches $s_i$ in a large component.
Let $d$ be the length of $p$ and let $r_1$ be the number of rewards collected by NN in $p$ (including the last reward in $p$ which is back in a large component, but not including $s_1$). Note that $r_1\ge 1$. Then
 $d \le (2^{r_1}-1)\theta$.
\end{lemma}
\begin{proof}
Let $p_i$ be the prefix of $p$ that ends at the $i$th reward on $p$ ($i\le r_1$) and let $d_i$ be the length of $p_i$.
Let $\ell_i$ be the distance from the $i$th reward on $p$ to the $(i+1)$th reward on $p$. 
Since when NN is at the $i$th reward on $p$, the neighbor of $s_1$ in $C$ is at distance at most
$d_i + \theta$ from this reward we have that $\ell_i \le d_i + \theta$. Thus, 
$d_{i+1}\le 2d_i + \theta$ (with the initial condition $d_0 = 0$).
The solution to this recurrence is $d_i = (2^i-1)\theta$.
\end{proof}

\begin{lemma}
\label{lemma:budget2}
For $k<\log (n)$, we have that
after $k$ visits of R-NN in large CCs, for any $s$ in a large CC there exists an unvisited reward at  distance shorter than $(k+1)\theta$ from $s$.
\end{lemma}

\begin{proof}
Let $s$ be a reward in a large component $C$. We have collected at most $k$ rewards from $C$.
Therefore, there exists a reward $s' \in C$ which we have not collected at distance at most $(k+1) \theta$ from $s$. 
\end{proof}

Lemma \ref{lemma:budget1} and Lemma \ref{lemma:budget2} imply the following corollary.

\begin{corollary}
\label{lemma:budget3}
Assume that  $k<\log (n)$, and let
 $p$ be the path of NN from its $k$th reward in a large CC to its $(k+1)$st reward in a large connected component.
Let $d$ denote the length of $p$ and 
$r_k$ be the number of rewards on $p$ (excluding the first and including the last). Then 
 $d\le (2^{r_k}-1)(k+1)\theta \le 2^{r_k+1}k\theta $. 
\end{corollary}

The following lemma concludes the analysis of this step.

\begin{lemma}
\label{lemma:sumri2ri}
Let $p$ be the prefix of R-NN of length $x$.
Let $k$ be the number of segments on $p$ of R-NN  that connect rewards in large CCs and contain internally rewards in small CCs.
For $1\le i\le k$, let $r_i$ be the number of rewards R-NN collects in the $i$th segment. Then $\sum_{i=1}^k r_i = \Omega(\log n)$.
(We assume that $p$ splits exactly into $k$ segments, but in fact the last segment may be incomplete, this requires a minor adjustment in the proof.)
\end{lemma}
\begin{proof}
since $\forall i, r_i\ge 1,$ then if $k \ge \log (n)$ the lemma follows.
So assume that $s < \log n$.
By Corollary \ref{lemma:budget3} we have that 
\begin{equation} \label{eq:hh}
x \le \sum_{i=1}^k 2^{r_i+1}i\theta \le 2^{r_{max}+2}k^2\theta \ .
\end{equation}
where $r_{max} = \mbox{argmax} \enspace \{r_i\}_{i=1}^{k}$.
Since $\theta = x/\sqrt{n}$, Equation (\ref{eq:hh}) implies that 
$\sqrt{n} \le 2^{r_{max}+2}k^2$ and since $k\le \log n$ we get that 
$\sqrt{n} \le 2^{r_{max}+2}\log^2(n)$. Taking logs the lemma follows. 
\end{proof}

Lemma \ref{lemma:sumri2ri} guarantees that once at $s_1 \in C_j,$ R-NN collects  $\Omega(\log (n))$ rewards before traversing a distance of $x$.  Next, notice that the chance that $s_1$ (as defined in Algorithm \ref{alg:nn-bfs}) belongs to one of the large CCs is $p = \frac{n-\sqrt{n}\log (n)}{n}$, 
which is larger than $1/2$ for $n \ge 256$. 

Finally, similar to NN-RDFS, assume that the value of OPT is greater than a constant fraction of $n$, i.e., $OPT \ge n / 2^\alpha.$ This means that OPT must have collected the first $n / 2^{\alpha+1}$ rewards after traversing a distance of at most $\tilde{x} = \left( \alpha + 1\right)x,$ \footnote{To see this, recall that after traversing a distance of $\tilde{x}$, OPT achieved less than $n / 2^{\alpha+1}$. Since it already traversed $\tilde{x}$ it can only achieve less than $n / 2^{\alpha+1}$ from the remaining rewards, thus a contraction with the assumption that it achieved more than $n / 2^\alpha$.}, and denote this fraction of the rewards by $S_\text{OPT}.$ Further denote by $d_{\text{min}},d_{\text{max}}$ the shortest and longest distances from $s_0$ to $S_\text{OPT}$ respectively. By the triangle inequality, $d_{\text{max}} - d_{\text{min}} \le \tilde{x};$ therefore, with a constant probability of $\frac{1}{2^{\alpha+1}},$ we get that $s_1 \in S_\text{OPT}.$ By taking expectation over the first random pick, it follows that

$$
\frac{\text{R-NN}}{\text{OPT}} \ge \frac{1}{2^{\alpha +1}} \frac{\gamma ^{d_\text{max}} \log (n)}{\gamma^{d_\text{min}} n} =\frac{\log (n)}{4^{\alpha +1}n} =\Omega \left(\frac{\log (n)}{n}\right).
$$

\textbf{Step 2.}

Similar to the analysis of NN-RDFS, we now assume that OPT collects its value from $n'<n$ rewards that it collects in a segment of length $x$ (and from all other rewards OPT collects a negligible value). Recall that the R-NN is either NN with probability $0.5$ or a random pick with probability $0.5$ followed by NN. By picking the single reward closets to the starting point, NN gets at least $1/n'$ of the value of OPT. Notice, that we do not need to assume anything about the length of the tour that OPT takes to collect the $n'$ rewards (since we didn't use it in Step 1). It follows that: $$
\frac{\text{R-NN}}{\text{OPT}} \ge \frac{1}{2} \cdot \frac{1}{n'} + \frac{1}{2} \cdot \frac{n'}{n} \cdot \frac{\log (n')}{n'} = \frac{1}{2n'} + \frac{\log (n')}{n} 
$$
Thus, in the worst case scenario, $n'\log (n') \approx n$,
which implies that 
$n' = \Theta(\frac{n}{\log (n)})$. Therefore 
 $\frac{\text{R-NN}}{\text{OPT}} = \Omega \left( \frac{\log (n)}{n} \right)$. 

\end{proof}
\newpage

\section{Exact solutions for the RD-TSP}
\label{sec:exact}
We now present a variation of the Held-Karp algorithm for the RD-TSP. Note that similar to the TSP, $C(\{S,k\})$ denotes the length of the tour visiting all the cities in $S$, with $k$ being the last one (for TSP this is the length of the shortest tour). However, our formulation required the definition of an additional recursive quantity, $V(\{S,k\})$, that accounts for the value function (the discounted sum of rewards) of the shortest path. Using this notation, we observe that Held-Karp is identical to doing tabular Q-learning on SMDP $M_s$. Since Held-Karp is known to have exponential complexity, it follows that solving $M_s$ using SMDP algorithms is also of exponential complexity. 

\begin{algorithm}[h]
\caption{The Held-Karp for the TSP (blue) and RD-TSP (black)}
\label{alg:held-karp}
\begin{algorithmic}
\STATE {\bfseries Input:} Graph $G$, with $n$ nodes

\FOR{$k$ := $2$ to $n$}
\STATE    $\color{blue}{C(\{k\}, k) := d_{1,k}}$
\STATE    $C(\{k\}, k)$ := $d_{1,k}$
\STATE    $V(\{k\}, k)$ := $\gamma^{d_{1,k}} $
\ENDFOR
\FOR{$s$ := $2$ to $n-1$}
    \FOR{all $S \subseteq \{2, . . . , n\}, |S| = s$}
         \FOR{all $k \in S$}
             \STATE $\color{blue}{C(S, k) =\mbox{min}_{m \ne k ,m \in s } [C(S         \setminus \{k\}, m) + d_{m,k}]}$
            \STATE $Q(S,k,a) = [V(S \setminus \{k\}, a) + \gamma ^{C(S \setminus \{k\},a)} \cdot \gamma ^{d_{a,k}} ]$
            \STATE $a^*$ = $\mbox{arg max}_{a \ne k ,a \in s } \enspace Q(S,k,a)$
            \STATE $C(S, k) = C(S \setminus \{k\},a^*) + d_{a^*,k}$
            \STATE $V(S, k) = Q(S,k,a^*)$
        \ENDFOR
    \ENDFOR
\ENDFOR
\STATE $\color{blue}{opt := \mbox{min}_{k\ne 1} [C(\{2,  . . . , n\}, k) + d_{k,1}]}$\\
\STATE $opt := \mbox{max}_{k\ne 1} [V(\{2, \ldots , n\},k) + \gamma ^{C(\{2,  \ldots , n\}, k)+d_{k,1}} $\\
return ($opt$)\\
\end{algorithmic}
\end{algorithm}

\subsection{Exact solutions for simple geometries }
We now provide exact, polynomial time solutions based on dynamic programming for simple geometries, like a  line and a star. We note that such solutions (exact and polynomial) cannot be derived for general geometries.

\subsubsection{Dynamic programming on a line (1D)}
Given an RD-TSP instance, such that all the rewards are located on a single line (denoted by the integers $1,\ldots,n$ from left to right), it is easy to see that an optimal policy collects, at any time, either the nearest reward to the right or the left of its current location. Thus, at any time the set of rewards that it has already collected lie in a continuous interval between the first uncollected reward to the left of the origin, denoted $\ell$,  and the first uncollected reward to the right of the origin denoted $r$.  The action to take next is either to collect $\ell$ or to collect $r$.

It follows that the state of the optimal agent is uniquely defined by a triple $(\ell,r,c)$, where $c$ is the current location of the agent. Observe that $c \in \{\ell+1,r-1\} $  and therefore there are $O(n^2)$ possible states in which the optimal policy could be. 

Since we were able to classify a state space of polynomial size which contains all states of the optimal policy, then we can describe a 
dynamic programming scheme (Algorithm \ref{alg:line}) that finds the optimal policy. The algorithm
computes a table $V$, where
$V(\ell,r,\rightarrow)$ is
the maximum value we can get by collecting all rewards $1,\ldots \ell$ and $r,\ldots, n$
starting from $r-1$, and  
$V(\ell,r,\leftarrow)$ is defined analogously starting from $\ell+1$. The algorithm
 first initializes the entries of  $V$ 
 where either $\ell = 0$ or $r=n+1$. These entries correspond to the cases where all the rewards to the left (right) of the agent have been collected.
(In these cases the agent continues to collect all remaining rewards one by one in their order.) It then iterates over $t$, a counter over the number of rewards that are left to collect. For each value of $t$, we define $S$ as all the combinations of partitioning these $t$ rewards to the right and the left of the agent.
We fill $V$ by increasing value of $t$. 
To fill an entry $V(\ell,r,\leftarrow)$  such that $\ell + (n+1 -r) = t$ we take the largest among 1) the value to collect $\ell$ and then
the rewards $1,\ldots \ell-1$ and $r,\ldots, n$, appropriately discounted and 2) the value to collect $r$ and then 
$1,\ldots \ell$ and $r+1,\ldots, n$. 
We fill $V(\ell,r,\rightarrow)$ analogously.

The optimal value for starting position $j$ is $1+V(j-1,j+1,\rightarrow)$.
Note that the Algorithm computes the value function; to get the policy, one has merely to track the argmax at each maximization step.

\begin{algorithm}[h]
\caption{Exact solution on a line}
\label{alg:line}
\begin{algorithmic}
\STATE {\bfseries Input:} Graph G, with n nodes
\STATE Init $V(\cdot,\cdot,\cdot)=0$
\FOR{$t=1,n$}
    \STATE $V(\ell=t,n+1,\rightarrow) := \gamma^{d_{t,n}} \cdot \left(1+\sum _{j=t} ^{2}         \gamma^{\sum _{i=0}^{j} d_{i,i-1}}\right)  $ 
    \STATE $V(0,r=t,\leftarrow) := \gamma^{d_{1,t}} \cdot \left(1+\sum _{j=t} ^{n-1}             \gamma^{\sum _{i=0}^{j} d_{i,i+1}}\right)  $ 
\ENDFOR
\FOR{$t = 2, .. n-1  $}
    \STATE $S = \{ (i,n + 1 - j) | i+j = t \}$
    \FOR{$(\ell,r) \in S$}
        \IF{$V(\ell,r,\leftarrow) = 0$}\vspace{-0.4cm}          
        \STATE \begin{equation*}\vspace{-0.4cm}
        V(\ell,r,\leftarrow) = \mbox{max} \begin{cases}
                \gamma ^{d_{\ell,\ell+1}}  \left[ 1 + V(\ell-1,r,\leftarrow) \right] \\
                \gamma^{d_{\ell+1,r}} \left[ 1 + V(\ell,r+1,\rightarrow) \right] 
             \end{cases}
        \end{equation*}
        \ENDIF                   
        \IF{$V(\ell,r,\rightarrow) = 0$}\vspace{-0.4cm}  
        \STATE \begin{equation*} \vspace{-0.4cm}
            V(\ell,r,\rightarrow) = \mbox{max} \begin{cases}
                \gamma ^{d_{\ell,r-1}}  \left[ 1 + V(\ell-1,r,\leftarrow) \right] \\
                \gamma^{d_{r-1,r}} \left[ 1 + V(\ell,r+1,\rightarrow) \right] 
             \end{cases}
        \end{equation*}
        \ENDIF                     
    \ENDFOR
\ENDFOR
\end{algorithmic}
\caption{Optimal solution for the RD-TSP on a line.
The rewards are denoted by $1,\ldots, n$ from left to right. We denote by $d_{i,j}$  the distance between reward $i$ and  reward $j$. 
We denote by $V(\ell,r,\rightarrow)$
the maximum value we can get by collecting all rewards $1,\ldots, \ell$ and $r,\ldots,n$ starting from reward $r-1$. 
 Similarly, we  denote by
$V(\ell,r,\leftarrow)$ maximum value we can get by collecting all rewards $1,\ldots, \ell$ and $r,\ldots,n$
starting from $\ell +1$.
If the leftmost (rightmost) reward was collected we define $\ell=0$ ($r=n+1$).}
\end{algorithm}

\subsubsection{Dynamic programming on a $d$-star}
We consider an RD-TSP instance, such that all the rewards are located on a d-star, i.e., all the rewards are connected to a central connection point via one of $d$ lines and there are $n_i$ rewards along the i$th$ line. We denote the rewards on the $i$th line by $m^i_j \in \{1,..,n_i\},$ ordered from the origin to the end of the line, and focus on the case where the agent starts at the origin.\footnote{The more general case is solved by applying Algorithm \ref{alg:line} until the origin is reached followed by Algorithm \ref{alg:dstar}} It is easy to see that an optimal policy collects, at any time, the uncollected reward that is nearest to the origin along one of the $d$ lines. Thus, at any time the set of rewards that it has already collected lie in $d$ continuous intervals between the origin and the first uncollected reward along each line, denoted by $\bar{\ell}=\{\ell_i\}_{i=1}^d$. The action to take next is to collect one of these nearest uncollected rewards. It follows that the state of the optimal agent is uniquely defined by a tuple $(\bar{\ell},c)$, where $c$ is the current location of the agent. Observe that $c \in \{m^i_{\ell_i-1} \}_{i=1}^d$  and therefore there are $O(dn^{d})$ possible states in which the optimal policy could be. 

Since we were able to classify a state space of  polynomial size which contains all states of the optimal policy then we can describe a dynamic programming scheme (Algorithm \ref{alg:dstar}) that finds the optimal policy. The algorithm computes a table $V$, where $V(\bar{\ell},c)$ is the maximum value we can get by collecting all rewards $\{m^i_{\ell_i} ,\ldots, m^i_{n_i}\}_{i=1}^d$ starting from $c$. The algorithm first initializes the entries of  $V$  where all $\ell_i = n_i+1$ except for exactly one entry. These entries correspond to the cases where all the rewards have been collected, except in one line segment (in these cases the agent continues to collect all remaining rewards one by one in their order.) It then iterates over $t$, a counter over the number of rewards that  are left to collect. For each value of $t$, we define $S$ as all the combinations of partitioning these $t$ rewards among $d$ lines. We fill $V$ by increasing value of $t$.  To fill an entry $V(\bar{\ell},c)$  such that $\sum l_i = n-t$ we take the largest among the values for collecting $\ell_i$ and then the rewards $m^1_{\ell_1} \ldots m^1_{n_1},\ldots, m^i_{\ell_i+1}, \ldots m^i_{n_i},\ldots,m^d_{\ell_d},\ldots,m^d_{n_d} $ appropriately discounted. 

Note that the Algorithm computes the value function; to get the policy, one has merely to track the argmax at each maximization step.

\begin{algorithm}
\caption{Exact solution on a d-star}
\label{alg:dstar}
\begin{algorithmic}
\STATE {\bfseries Input:} Graph G, with n rewards.
\STATE Init $V(\cdot,\cdot)=0$
\FOR{$i \in \{1,..,d\}$}
\FOR{$\ell_i \in \{1,..,n_i\}$}
\FOR{$c \in \{m^1_{n_1},..,m^i_{\ell_i-1},m^d_{n_d}\}$}
        \STATE $V\left(n_1+1,..,l_i,n_d+1,c\right) =  \gamma^{d_{c,m^i_{\ell_i}}}\cdot\left(1+\sum _{j=\ell_i}                                 \gamma^{\sum _{k=0}^{j}d_{k,k+1}}\right)$ 
\ENDFOR
\ENDFOR
\ENDFOR
\FOR{$t = 2, .. n-1  $}
    \STATE $S = \{ \bar{\ell} | \ell_i \in \{1,..,n_i\} \sum \ell_i = n-t \}$
    \FOR{$\bar{\ell} \in S, c \in \{m^i_{\ell_i-1} \}_{i=1}^d$}
        \IF{$V(\bar{\ell},c) = 0$}  
            \STATE $A = \{ m^i_j | j = \ell_i , j\le n_i\} $ 
            \FOR{$a \in A $}
                    \STATE \begin{equation*}
                    V(\bar{\ell},c)= \mbox{max} \begin{cases} 
                    V(\bar{\ell},c)\\
                    \gamma ^{d_{c,a}}  \left[ 1 + V(\bar{\ell}+e_a,a) \right]
                    \end{cases}\end{equation*}
            \ENDFOR
        \ENDIF                   
    \ENDFOR
\ENDFOR
\end{algorithmic}
\caption{Optimal solution for the RD-TSP on a d-star. We denote by $n_i$ the amount of rewards there is to collect on the $i$th line, and denote by $m^i_j \in \{1,..,n_i\}$ the rewards along this line, from the center of the star to the end of that line. We denote by $d_{m^t_i,m^k_j}$  the distance between reward $i$ on line $t$ and reward $j$ on line $k$. The first uncollected reward along each line is denoted by $\ell_i$,
and the maximum value we can get by collecting all the remaining rewards $m^1_{\ell_1} \ldots m^1_{n_1},\ldots,m^d_{\ell_d},\ldots,m^d{n_d}$ starting from reward $c$ is defined by $V(\bar{\ell}=\{\ell_i\}_{i=1}^d,c)$.  If all the rewards were collected on line $i$ we define $\ell_i=n_i+1$.}
\end{algorithm}

\newpage

\end{document}